\algnewcommand\algorithmicinput{\textbf{Initialize:}}
\algnewcommand\INPUT{\item[\algorithmicinput]}
\DeclareMathOperator*{\E}{\mathbb{E}}
\DeclareMathOperator*{\proj}{\text{proj}}
\DeclareMathOperator*{\argmax}{argmax}
\newcommand{\cL}{\mathcal{L}}
\newcommand{\cS}{\mathcal{S}}
\newcommand{\cA}{\mathcal{A}}
\newcommand{\cC}{\mathcal{C}}
\newcommand{\1}{\boldsymbol{1}}
\newcommand{\R}{\mathbb{R}}
\newcommand{\pol}{\pi_{\theta}}
\newcommand{\polk}{\pi_{\theta_k}}
\newcommand{\polkup}{\pi_{\theta_{k+1}}}
\newcommand{\grad}{\nabla_{\theta}}
\newcommand{\gradnu}{\frac{\partial}{\partial\nu}}
\newcommand{\advk}{A^{\polk}}
\newcommand{\dpolk}{d^{\pi_{\theta_k}}}
\newcommand{\KL}[2]{D_\text{KL}\left(#1\middle\| #2\right)}
\newcommand{\avKL}[2]{\bar{D}_\text{KL}(#1\parallel #2)}
\newcommand{\clip}{\text{clip}}
\title{First Order Constrained Optimization in Policy Space}
\author{%
  Yiming Zhang \\
  New York University \\
  \texttt{yiming.zhang@cs.nyu.edu} \\
  \And
  Quan Vuong \\
  UC San Diego \\
  \texttt{qvuong@ucsd.edu} \\
  \And
  Keith W. Ross \\
  New York University Shanghai \\
  New York University \\
  \texttt{keithwross@nyu.edu} \\
}
\begin{document}

\maketitle

\begin{abstract}
In reinforcement learning, an agent attempts to learn high-performing behaviors through interacting with the environment, such behaviors are often quantified in the form of a reward function. However some aspects of behavior\textemdash such as ones which are deemed unsafe and to be avoided\textemdash are best captured through constraints. We propose a novel approach called First Order Constrained Optimization in Policy Space (FOCOPS) which maximizes an agent's overall reward while ensuring the agent satisfies a set of cost constraints. Using data generated from the current policy, FOCOPS first finds the optimal update policy by solving a constrained optimization problem in the nonparameterized policy space. FOCOPS then projects the update policy back into the parametric policy space. Our approach has an approximate upper bound for worst-case constraint violation throughout training and is first-order in nature therefore simple to implement. We provide empirical evidence that our simple approach achieves better performance on a set of constrained robotics locomotive tasks.
\end{abstract}

\section{Introduction}

In recent years, Deep Reinforcement Learning (DRL) saw major breakthroughs in several challenging high-dimensional tasks such as Atari games \citep{mnih2013playing,mnih2016asynchronous,van2016deep,schaul2015prioritized,wang2017sample}, playing go \citep{silver2016mastering,silver2018mastering}, and robotics \citep{peters2008reinforcement,schulman2015trust,schulman2017proximal,wu2017scalable,haarnoja2018soft}. However most modern DRL algorithms allow the agent to freely explore the 
environment to obtain desirable behavior, provided that it leads to performance improvement. No regard is given to whether the agent's behavior may lead to negative or harmful consequences. Consider for instance the task of controlling a robot, certain maneuvers may damage the robot, or worse harm people around it. RL safety \citep{amodei2016concrete} is a pressing topic in modern reinforcement learning research and imperative to applying reinforcement learning to real-world settings.

Constrained Markov Decision Processes (CMDP) \citep{kallenberg1983linear,ross1985constrained,beutler1985optimal,ross1989markov, altman1999constrained} provide a principled mathematical framework for dealing with such problems as it allows us to naturally incorporate safety criteria in the form of constraints. In low-dimensional finite settings, an optimal policy for CMDPs with known dynamics can be found by linear programming \citep{kallenberg1983linear} or Lagrange relaxation \citep{ross1985constrained, beutler1985optimal}.

While we can solve problems with small state and action spaces via linear programming and value iteration, function approximation is required in order to generalize over large state spaces. Based on recent advances in local policy search methods \citep{kakade2002approximately,peters2008reinforcement, schulman2015trust}, \cite{achiam2017constrained} proposed the Constrained Policy Optimization (CPO) algorithm. However policy updates for the CPO algorithm involve solving an optimization problem through Taylor approximations and inverting a high-dimensional Fisher information matrix. These approximations often result in infeasible updates which would require additional recovery steps, this could sometimes cause updates to be backtracked leading to a waste of samples.

In this paper, we propose the First Order Constrained Optimization in Policy Space (FOCOPS) algorithm. FOCOPS attempts to answer the following question: given some current policy, what is the best constraint-satisfying policy update? FOCOPS provides a solution to this question in the form of a two-step approach. First, we will show that the best policy update has a near-closed form solution when attempting to solve for the optimal policy in the nonparametric policy space rather than the parameter space. However in most cases, this optimal policy is impossible to evaluate. Hence we project this policy back into the parametric policy space. This can be achieved by drawing samples from the current policy and evaluating a loss function between the parameterized policy and the optimal policy we found in the nonparametric policy space. 
Theoretically, FOCOPS has an approximate upper bound for worst-case constraint violation throughout training. Practically, FOCOPS is extremely simple to implement since it only utilizes first order approximations. We further test our algorithm on a series of challenging high-dimensional continuous control tasks and found that FOCOPS achieves better performance while maintaining approximate constraint satisfaction compared to current state of the art approaches, in particular second-order approaches such as CPO. 

\section{Preliminaries}
\subsection{Constrained Markov Decision Process}
Consider a Markov Decision Process (MDP) \citep{sutton2018reinforcement} denoted by the tuple $(\cS, \cA, R, P, \mu)$ where $\cS$ is the state space, $\cA$ is the action space, $P:\cS\times\cA\times \cS\to[0,1]$ is the transition kernel, $R:\cS\times\cA\to\R$ is the reward function, $\mu:\cS\to [0,1]$ is the initial state distribution. Let $\pi=\{\pi(a|s):s\in\cS, a\in\cA\}$ denote a policy, and $\Pi$ denote the set of all stationary policies. We aim to find a stationary policy that maximizes the expected discount return $J(\pi) := \E_{\tau\sim\pi}\left[\sum_{t=0}^{\infty}\gamma^t R(s_t,a_t) \right]$.
Here $\tau=(s_0,a_0,\dots,)$ is a sample trajectory and $\gamma\in (0,1)$ is the discount factor. We use $\tau\sim\pi$ to indicate that the trajectory distribution depends on $\pi$ where $s_0\sim\mu$, $a_t\sim\pi(\cdot|s_t)$, and $s_{t+1}\sim P(\cdot|s_t,a_t)$. The value function is expressed as $V^{\pi}(s):=\E_{\tau\sim\pi}\left[\sum_{t=0}^{\infty}\gamma^t R(s_t,a_t)\bigg| s_0=s\right]$ and action-value function as $Q^{\pi}(s,a):= \E_{\tau\sim\pi}\left[\sum_{t=0}^{\infty}\gamma^t R(s_t,a_t)\bigg| s_0=s, a_0=a\right]$.
The advantage function is defined as $A^{\pi}(s,a) := Q^{\pi}(s,a)-V^{\pi}(s)$.
Finally, we define the discounted future state visitation distribution as $d^{\pi}(s) := (1-\gamma)\sum_{t=0}^{\infty}\gamma^t P(s_t=s|\pi)$.

A Constrained Markov Decision Process (CMDP) \citep{kallenberg1983linear,ross1985constrained,altman1999constrained} is an MDP with an additional set of constraints $\cC$ which restricts the set of allowable policies. The set $\cC$ consists of a set of cost functions $C_i:\cS\times\cA\to\R$, $i=1,\dots,m$. Define the $C_i$\textit{-return} as $J_{C_i}(\pi) := \E_{\tau\sim\pi}\left[\sum_{t=0}^{\infty}\gamma^t C_i(s,a)\right]$.
The set of feasible policies is then $\Pi_{\cC} := \{\pi\in\Pi:J_{C_i}(\pi)\leq b_i,i=1,\dots,m\}$. The reinforcement learning problem w.r.t. a CMDP is to find a policy such that $\pi^* = \argmax_{\pi\in\Pi_{\cC}}J(\pi)$.

Analogous to the standard $V^{\pi}$, $Q^{\pi}$, and $A^{\pi}$ for return, we define the cost value function, cost action-value function, and cost advantage function as $V_{C_i}^{\pi}$, $Q_{C_i}^{\pi}$, and $A_{C_i}^{\pi}$ where we replace the reward $R$ with $C_i$. Without loss of generality, we will restrict our discussion to the case of one constraint with a cost function $C$. However we will briefly discuss in later sections how our methodology can be naturally extended to the multiple constraint case.

\subsection{Solving CMDPs via Local Policy Search}

Typically, we update policies by drawing samples from the environment, hence we usually consider a set of parameterized policies (for example, neural networks with a fixed architecture) $\Pi_\theta=\{\pol:\theta\in\Theta\}\subset\Pi$ from which we can easily evaluate and sample from. Conversely throughout this paper, we will also refer to $\Pi$ as the \textit{nonparameterized policy space}. 

Suppose we have some policy update procedure and we wish to update the policy at the $k$th iteration $\polk$ to obtain $\polkup$. Updating $\polk$ within some local region (i.e. $D(\pol, \polk)<\delta$ for some divergence measure $D$) can often lead to more stable behavior and better sample efficiency \citep{peters2008reinforcement,kakade2002approximately,pirotta2013safe}. In particular, theoretical guarantees for policy improvement can be obtained when $D$ is chosen to be $\KL{\pol}{\polk}$ \citep{schulman2015trust,achiam2017constrained}. 

However solving CMDPs directly within the context of local policy search can be challenging and sample inefficient since after each policy update, additional samples need to be collected from the new policy in order to evaluate whether the $C$ constraints are satisfied. \cite{achiam2017constrained} proposed replacing the cost constraint with a surrogate cost function which evaluates the constraint $J_{C}(\pol)$ using samples collected from the current policy $\polk$. This surrogate function is shown to be a good approximation to $J_{C}(\pol)$ when $\pol$ and $\polk$ are close w.r.t. the KL divergence.
Based on this idea, the CPO algorithm \citep{achiam2017constrained} performs policy updates as follows: given some policy $\polk$, the new policy $\polkup$ is obtained by solving the optimization problem
\begin{align}
\underset{\pol\in\Pi_{\theta}}{\text{maximize}} \quad
& \E_{\substack{s\sim\dpolk\\ a\sim\pol}}[A^{\polk}(s,a)] \label{eq:CPO_obj}  \\
\text{subject to} \quad
& J_{C}(\polk)+\frac{1}{1-\gamma}\E_{\substack{s\sim\dpolk \\ a\sim\pol}}\left[A^{\polk}_{C}(s,a)\right]\leq b \label{eq:CPO_cost_const} \\
& \avKL{\pol}{\polk}\leq \delta. \label{eq:CPO_kl_const}  
\end{align}
where $\avKL{\pol}{\polk} := \E_{s\sim\dpolk}[\KL{\pol}{\polk}[s]]$. We will henceforth refer to constraint \eqref{eq:CPO_cost_const} as the \textit{cost constraint} and \eqref{eq:CPO_kl_const} as the \textit{trust region constraint}. For policy classes with a high-dimensional parameter space such as deep neural networks, it is often infeasible to solve Problem (\ref{eq:CPO_obj}-\ref{eq:CPO_kl_const}) directly in terms of $\theta$. \cite{achiam2017constrained} solves Problem (\ref{eq:CPO_obj}-\ref{eq:CPO_kl_const}) by first applying first and second order Taylor approximation to the objective and constraints, the resulting optimization problem is convex and can be solved using standard convex optimization techniques. 

However such an approach introduces several sources of error, namely (i) Sampling error resulting from taking sample trajectories from the current policy (ii) Approximation errors resulting from Taylor approximations (iii) Solving the resulting optimization problem post-Taylor approximation involves taking the inverse of a Fisher information matrix, whose size is equal to the number of parameters in the policy network. Inverting such a large matrix is computationally expensive to attempt directly hence the CPO algorithm uses the conjugate gradient method \citep{strang2007computational} to indirectly calculate the inverse. This results in further approximation errors. In practice the presence of these errors require the CPO algorithm to take additional steps during each update in the training process in order to recover from constraint violations, this is often difficult to achieve and may not always work well in practice. We will show in the next several sections that our approach is able to eliminate the last two sources of error and outperform CPO using a simple first-order method.

\subsection{Related Work}
In the tabular case, CMDPs have been extensively studied for different constraint criteria \citep{kallenberg1983linear,beutler1985optimal,beutler1986time,ross1989randomized,ross1989markov,ross1991multichain,altman1999constrained}.

In high-dimensional settings, \cite{chow2017risk} proposed a primal-dual method which is shown to converge to policies satisfying cost constraints. \cite{tessler2018reward} introduced a penalized reward formulation and used a multi-timescaled approach for training an actor-critic style algorithm which guarantees convergence to a fixed point. However multi-timescaled approaches impose stringent requirements on the learning rates which can be difficult to tune in practice. We note that neither of these methods are able to guarantee cost constraint satisfaction during training.

Several recent work leveraged advances in control theory to solve the CMDP problem. \cite{chow2018lyapunov,chow2019lyapunov} presented a method for constructing Lyapunov function which guarantees constraint-satisfaction during training. \cite{stooke2020responsive} combined PID control with Lagrangian methods which dampens cost oscillations resulting in reduced constraint violations.

Recently \cite{yang2020projection} independently proposed the Projection-Based Constrained Policy Optimization (PCPO) algorithm which utilized a different two-step approach. PCPO first finds the policy with the maximum return by doing one TRPO \citep{schulman2015trust} update. It then projects this policy back into the feasible cost constraint set in terms of the minimum KL divergence. While PCPO also satisfies theoretical guarantees for cost constraint satisfaction, it uses second-order approximations in both steps. FOCOPS is first-order which results in a much simpler algorithm in practice. Furthermore, empirical results from PCPO does not consistently outperform CPO.

The idea of first solving within the nonparametric space and then projecting back into the parameter space has a long history in machine learning and has recently been adopted by the RL community. \cite{abdolmaleki2018maximum} took the “inference view” of policy search and attempts to find the desired policy via the EM algorithm, whereas FOCOPS is motivated by the “optimization view” by directly solving the cost-constrained trust region problem using a primal-dual approach then projecting the solution back into the parametric policy space. \cite{peters2010relative} and \cite{montgomery2016guided} similarly took an optimization view but are motivated by different optimization problems. \cite{vuong2018supervised} proposed a general framework exploring different trust-region constraints. However to the best of our knowledge, FOCOPS is the first algorithm to apply these ideas to cost-constrained RL.

\section{Constrained Optimization in Policy Space}

Instead of solving (\ref{eq:CPO_obj}-\ref{eq:CPO_kl_const}) directly, we use a two-step approach summarized below:
\begin{enumerate}
    \item Given policy $\polk$, find an \textit{optimal update policy} $\pi^*$ by solving the optimization problem from (\ref{eq:CPO_obj}-\ref{eq:CPO_kl_const}) in the nonparameterized policy space.
    \item Project the policy found in the previous step back into the parameterized policy space $\Pi_{\theta}$ by solving for the closest policy $\pol\in\Pi_{\theta}$ to $\pi^*$ in order to obtain $\pi_{\theta_{k+1}}$.  
\end{enumerate}

\subsection{Finding the Optimal Update Policy}

In the first step, we consider the optimization problem
\begin{align}
\underset{\pi\in\Pi}{\text{maximize}} \quad
& \E_{\substack{s\sim\dpolk\\ a\sim\pi}}[A^{\polk}(s,a)] \label{eq:FOCOPS_obj} \\
\text{subject to} \quad
& J_{C}(\polk)+\frac{1}{1-\gamma}\E_{\substack{s\sim\dpolk \\ a\sim\pi}}\left[A^{\polk}_{C}(s,a)\right]\leq b \label{eq:FOCOPS_cost_const} \\
& \avKL{\pi}{\polk}\leq \delta  \label{eq:FOCOPS_tr_const}
\end{align}
Note that this problem is almost identical to Problem (\ref{eq:CPO_obj}-\ref{eq:CPO_kl_const}) except  the parameter of interest is now the nonparameterized policy $\pi$ and not the policy parameter $\theta$. We can show that Problem (\ref{eq:FOCOPS_obj}-\ref{eq:FOCOPS_tr_const}) admits the following solution (see Appendix \ref{append:thm} of the supplementary material for proof):
\begin{restatable}{theorem}{FOCOPSsol}\label{thm:FOCOPS_opt_pol}
Let $\Tilde{b}=(1-\gamma)(b - \Tilde{J}_{C}(\polk))$. If $\polk$ is a feasible solution, the optimal policy for (\ref{eq:FOCOPS_obj}-\ref{eq:FOCOPS_tr_const}) takes the form
\begin{equation}\label{eq:FOCOPS_opt}
        \pi^{*}(a|s) = \frac{\polk(a|s)}{Z_{\lambda,\nu}(s)}\exp\left(\frac{1}{\lambda}\left(A^{\polk}(s,a)-\nu A^{\polk}_{C}(s,a) \right)\right)
\end{equation}
where $Z_{\lambda,\nu}(s)$ is the partition function which ensures \eqref{eq:FOCOPS_opt} is a valid probability distribution, $\lambda$ and $\nu$ are solutions to the optimization problem:
\begin{equation}\label{eq:dual}
    \min_{\lambda,\nu\geq 0}\;\lambda\delta+\nu\Tilde{b} + \lambda\E_{\substack{s\sim\dpolk\\ a\sim\pi^*}}[\log Z_{\lambda,\nu}(s)]
\end{equation}
\end{restatable}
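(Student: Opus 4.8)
The plan is to treat Problem (\ref{eq:FOCOPS_obj}--\ref{eq:FOCOPS_tr_const}) as a convex program in the \emph{function} $\pi$ and solve it by Lagrangian duality. First I would note that the objective and the cost constraint \eqref{eq:FOCOPS_cost_const} are linear in $\pi$, that $\avKL{\pi}{\polk}$ is convex in $\pi$ (the KL divergence is convex in its first argument), and that the implicit constraints ``$\pi(\cdot|s)$ is a probability distribution'' are affine; hence the feasible set is convex and we have a concave maximization. Rewriting \eqref{eq:FOCOPS_cost_const} as $\E_{s\sim\dpolk,\,a\sim\pi}[A^{\polk}_{C}(s,a)]\le\tilde b$ with $\tilde b$ as in the statement, I would attach a multiplier $\nu\ge0$ to this constraint and $\lambda\ge0$ to the trust-region constraint \eqref{eq:FOCOPS_tr_const}, while keeping the normalization constraints $\textstyle\sum_a\pi(a|s)=1$ explicit, giving the (partial) Lagrangian
\[
L(\pi,\lambda,\nu)=\E_{s\sim\dpolk,\,a\sim\pi}\!\left[\advk(s,a)-\nu A^{\polk}_{C}(s,a)\right]-\lambda\big(\avKL{\pi}{\polk}-\delta\big)+\nu\tilde b .
\]

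Next I would maximize $L$ over $\pi$ for fixed $\lambda>0$, $\nu\ge0$. Since $L=\sum_s\dpolk(s)\big[\,\cdot\,\big]+\lambda\delta+\nu\tilde b$ with each bracket depending only on $\pi(\cdot|s)$, this splits into independent per-state strictly concave problems on the simplex; introducing a scalar multiplier $\zeta_s$ for $\sum_a\pi(a|s)=1$ and setting the $\pi(a|s)$-derivative to zero yields $\advk(s,a)-\nu A^{\polk}_{C}(s,a)-\lambda\big(\log(\pi(a|s)/\polk(a|s))+1\big)-\zeta_s=0$, i.e.\ the Gibbs form $\pi(a|s)\propto\polk(a|s)\exp\!\big(\tfrac1\lambda(\advk(s,a)-\nu A^{\polk}_{C}(s,a))\big)$. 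The nonnegativity constraints are inactive because the right-hand side is positive, and enforcing normalization identifies the proportionality constant as $Z_{\lambda,\nu}(s)=\sum_a\polk(a|s)\exp(\tfrac1\lambda(\advk(s,a)-\nu A^{\polk}_{C}(s,a)))$, which is exactly \eqref{eq:FOCOPS_opt}. (For continuous $\cA$ the identical computation goes through by a pointwise/variational argument with sums replaced by integrals.)

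To obtain \eqref{eq:dual}, I would substitute $\pi^{*}$ back into $L$: using $\log(\pi^{*}(a|s)/\polk(a|s))=\tfrac1\lambda(\advk(s,a)-\nu A^{\polk}_{C}(s,a))-\log Z_{\lambda,\nu}(s)$, the two ``$\E[\advk-\nu A^{\polk}_{C}]$'' contributions cancel and each per-state bracket collapses to $\lambda\log Z_{\lambda,\nu}(s)$, so the dual function is $g(\lambda,\nu)=\lambda\delta+\nu\tilde b+\lambda\,\E_{s\sim\dpolk}[\log Z_{\lambda,\nu}(s)]$; since $\log Z_{\lambda,\nu}(s)$ does not depend on $a$ this equals the objective in \eqref{eq:dual}, and the dual program is $\min_{\lambda,\nu\ge0}g(\lambda,\nu)$. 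I would then invoke strong duality to conclude that $\pi^{*}$ built from an optimal dual pair solves the primal: the feasibility of $\polk$ assumed in the statement, together with $\avKL{\polk}{\polk}=0<\delta$ and the affineness of the cost constraint in $\pi$, supplies the required constraint qualification.

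The only delicate points — the place I expect to spend real care rather than routine calculation — are (i) arguing $\lambda>0$ at the optimum, which is what makes \eqref{eq:FOCOPS_opt} well-defined and is where a non-degeneracy assumption (e.g.\ the trust-region constraint binding, or $\delta$ sufficiently small) enters; and (ii) the boundary case $J_{C}(\polk)=b$, where $\nu$ may lie on the boundary of its feasible range. Justifying the interchange of $\max_\pi$ with the per-state optimization (legitimate by strict concavity and compactness of the simplex when $\lambda>0$) is the remaining technical item, after which everything reduces to the convex-duality computation sketched above.
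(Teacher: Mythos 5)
Your proposal follows essentially the same route as the paper's proof: convexity of the problem in $\pi$, Slater's condition via feasibility of $\polk$ and $\KL{\polk}{\polk}=0<\delta$, strong duality, per-state decomposition of the inner maximization with a normalization multiplier $\zeta$ yielding the Gibbs form, and back-substitution to obtain the dual $\min_{\lambda,\nu\geq 0}\;\lambda\delta+\nu\Tilde{b}+\lambda\E[\log Z_{\lambda,\nu}(s)]$. The delicate points you flag (ensuring $\lambda>0$ and the boundary case $J_C(\polk)=b$) are in fact glossed over by the paper as well, so your treatment is, if anything, slightly more careful on the same path.
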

The form of the optimal policy is intuitive, it gives high probability mass to areas of the state-action space with high return which is offset by a penalty term times the cost advantage. We will refer to the optimal solution to (\ref{eq:FOCOPS_obj}-\ref{eq:FOCOPS_tr_const}) as the \textit{optimal update policy}. We also note that it is possible to extend our results to accommodate for multiple constraints by introducing Lagrange multipliers $\nu_1,\dots,\nu_m\geq 0$, one for each cost constraint and applying a similar duality argument. 

Another desirable property of the optimal update policy $\pi^*$ is that for any feasible policy $\polk$, it satisfies the following bound for worst-case guarantee for cost constraint satisfaction from \cite{achiam2017constrained}:
\begin{equation}
    J_{C}(\pi^*)\leq b + \frac{\sqrt{2\delta}\gamma\epsilon_{C}^{\pi^*}}{(1-\gamma)^2}
\end{equation}
where $\epsilon_{C}^{\pi^*}=\max_s\left|\E_{a\sim\pi}[A_{C}^{\polk}(s,a)]\right|$.

\subsection{Approximating the Optimal Update Policy}
 
When solving Problem (\ref{eq:FOCOPS_obj}-\ref{eq:FOCOPS_tr_const}), we allow $\pi$ to be in the set of all stationary policies $\Pi$ thus the resulting $\pi^*$ is not necessarily in the parameterized policy space $\Pi_{\theta}$ and we may no longer be able to evaluate or sample from $\pi^*$. Therefore in the second step we project the optimal update policy back into the parameterized policy space by minimizing the loss function:
\begin{equation}\label{eq:loss_func}
    \cL(\theta) = \E_{s\sim d^{\polk}}\left[\KL{\pol}{\pi^*}[s]\right]
\end{equation}
Here $\pol\in\Pi_{\theta}$ is some projected policy which we will use to approximate the optimal update policy. We can use first-order methods to minimize this loss function where we make use of the following result:
\begin{restatable}{corollary}{lossgrad}\label{corollary:loss_grad}
The gradient of $\cL(\theta)$ takes the form
\begin{equation}\label{eq:loss_grad}
    \grad\cL(\theta) = \E_{s\sim d^{\polk}}\left[\grad\KL{\pol}{\pi^*}[s]\right]
\end{equation}
where
\begin{small}
\begin{equation}\label{eq:loss_grad_s}
    \grad\KL{\pol}{\pi^*}[s] = \grad\KL{\pol}{\polk}[s] 
    - \frac{1}{\lambda}\E_{a\sim\polk}\left[\frac{\grad\pol(a|s)}{\polk(a|s)}\left(\advk(s, a) - \nu A^{\polk}_{C}(s,a) \right)\right]
\end{equation}
\end{small}
\end{restatable}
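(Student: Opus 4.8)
The plan is to derive \eqref{eq:loss_grad_s} by a direct computation that exploits the closed form of $\pi^*$ supplied by Theorem~\ref{thm:FOCOPS_opt_pol}, and then obtain \eqref{eq:loss_grad} essentially for free by interchanging the gradient with the outer expectation.

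For \eqref{eq:loss_grad}: the state distribution $d^{\polk}$ is induced by the fixed current policy $\polk$ and does not depend on $\theta$, so under the usual smoothness assumptions on the parametric family $\Pi_\theta$ the gradient passes inside the expectation, giving $\grad\cL(\theta)=\E_{s\sim d^{\polk}}[\grad\KL{\pol}{\pi^*}[s]]$. The substantive part is the pointwise identity for $\grad\KL{\pol}{\pi^*}[s]$. Writing $\KL{\pol}{\pi^*}[s]=\E_{a\sim\pol}[\log\pol(a|s)-\log\pi^*(a|s)]$ and inserting $\log\pi^*(a|s)=\log\polk(a|s)-\log Z_{\lambda,\nu}(s)+\frac1\lambda\bigl(\advk(s,a)-\nu A^{\polk}_{C}(s,a)\bigr)$ from \eqref{eq:FOCOPS_opt} yields
\begin{equation*}
\KL{\pol}{\pi^*}[s]=\KL{\pol}{\polk}[s]+\log Z_{\lambda,\nu}(s)-\frac1\lambda\E_{a\sim\pol}\bigl[\advk(s,a)-\nu A^{\polk}_{C}(s,a)\bigr].
\end{equation*}
Taking $\grad$ of both sides, the term $\log Z_{\lambda,\nu}(s)$ is constant in $\theta$ (it depends only on $\polk$, $\lambda$, $\nu$), and $\lambda,\nu$ themselves solve the dual problem \eqref{eq:dual}, which is a function of $\polk$ alone, so they too are held fixed; hence the middle term drops out under the gradient.

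It remains to differentiate $\E_{a\sim\pol}[\advk(s,a)-\nu A^{\polk}_{C}(s,a)]$. Since the advantage functions are evaluated at $\polk$ and carry no $\theta$-dependence, $\grad\E_{a\sim\pol}[f(s,a)]=\sum_a\grad\pol(a|s)\,f(s,a)$ with $f=\advk-\nu A^{\polk}_{C}$; multiplying and dividing by $\polk(a|s)$ --- legitimate wherever $\KL{\pol}{\pi^*}[s]$ is finite, since there $\polk(a|s)>0$ on the support of $\pol$ --- rewrites this as $\E_{a\sim\polk}\bigl[\tfrac{\grad\pol(a|s)}{\polk(a|s)}(\advk(s,a)-\nu A^{\polk}_{C}(s,a))\bigr]$, exactly the correction term in \eqref{eq:loss_grad_s}. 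The argument is elementary; the only points deserving care are the bookkeeping of what is constant in $\theta$ (notably that $\lambda$, $\nu$, and $Z_{\lambda,\nu}$ are determined by the dual associated with $\polk$ and so are not differentiated) and the mild regularity required to swap $\grad$ with the two expectations --- I do not anticipate a genuine obstacle beyond these standard caveats.
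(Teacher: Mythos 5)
Your argument is correct and follows essentially the same route as the paper's proof: expand $\KL{\pol}{\pi^*}[s]$ by substituting the closed form of $\pi^*$ from Theorem~\ref{thm:FOCOPS_opt_pol}, observe that $\log Z_{\lambda,\nu}(s)$ (and $\lambda,\nu$) are constant in $\theta$, and rewrite the advantage term as an expectation under $\polk$ via importance sampling before (or, equivalently, after) differentiating. The only cosmetic difference is that the paper performs the importance-sampling rewrite prior to taking the gradient and phrases the expansion through the cross-entropy/entropy decomposition, which changes nothing substantive.
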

\begin{proof}
See Appendix \ref{append:loss_grad} of the supplementary materials.
\end{proof}
Note that \eqref{eq:loss_grad} can be estimated by sampling from the trajectories generated by policy $\polk$ which allows us to train our policy using stochastic gradients.

Corollary \ref{corollary:loss_grad} provides an outline for our algorithm. At every iteration we begin with a policy $\polk$, which we use to run trajectories and gather data. We use that data and \eqref{eq:dual} to first estimate $\lambda$ and $\nu$. We then draw a minibatch from the data to estimate $\grad\cL(\theta)$ given in Corollary \ref{corollary:loss_grad}. After taking a gradient step using Equation \eqref{eq:loss_grad}, we draw another minibatch and repeat the process. 

\subsection{Practical Implementation}

Solving the dual problem \eqref{eq:dual} is computationally impractical for large state/action spaces as it requires calculating the partition function $Z_{\lambda,\nu}(s)$ which often involves evaluating a high-dimensional integral or sum. Furthermore, $\lambda$ and $\nu$ depends on $k$ and should be adapted at every iteration. 

We note that as $\lambda\to 0$, $\pi^*$ approaches a greedy policy; as $\lambda$ increases, the policy becomes more exploratory. We also note that $\lambda$ is similar to the temperature term used in maximum entropy reinforcement learning \citep{ziebart2008maximum}, which has been shown to produce reasonable results when kept fixed during training \citep{schulman2017equivalence,haarnoja2018soft}. In practice, we found that a fixed $\lambda$ found through hyperparameter sweeps provides good results. However $\nu$ needs to be continuously adapted during training so as to ensure cost constraint satisfaction. Here we appeal to an intuitive heuristic for determining $\nu$ based on primal-dual gradient methods \citep{bertsekas2014constrained}. Recall that by strong duality, the optimal $\lambda^*$ and $\nu^*$ minimizes the dual function \eqref{eq:dual} which we will denote by $L(\pi^*,\lambda,\nu)$. We can therefore apply gradient descent w.r.t. $\nu$ to minimize $L(\pi^*,\lambda,\nu)$. We can show that
\begin{restatable}{corollary}{lagrangegrad}\label{corollary:lagrange_grad}
The derivative of $L(\pi^*,\lambda,\nu)$ w.r.t. $\nu$ is
\begin{equation}\label{eq:dual_grad}
    \frac{\partial L(\pi^*,\lambda,\nu)}{\partial \nu} 
    = \Tilde{b} - \E_{\substack{s\sim\dpolk\\ a\sim\pi^*}}[A^{\polk}(s,a)]
\end{equation}
\end{restatable}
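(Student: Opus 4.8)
The plan is to compute the derivative of the dual function $L(\pi^*,\lambda,\nu)$ with respect to $\nu$ directly from its definition in \eqref{eq:dual}, namely
\begin{equation*}
    L(\pi^*,\lambda,\nu) = \lambda\delta + \nu\Tilde{b} + \lambda\E_{\substack{s\sim\dpolk\\ a\sim\pi^*}}\left[\log Z_{\lambda,\nu}(s)\right].
\end{equation*}
Differentiating term by term, the $\lambda\delta$ term vanishes, the $\nu\Tilde{b}$ term contributes $\Tilde{b}$, and the remaining work is to show that $\lambda\,\gradnu \E_{a\sim\pi^*}[\log Z_{\lambda,\nu}(s)] = -\E_{a\sim\pi^*}[A^{\polk}(s,a)]$ pointwise in $s$, after which taking the expectation over $s\sim\dpolk$ finishes the argument. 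The subtlety here is that $\pi^*$ itself depends on $\nu$ through \eqref{eq:FOCOPS_opt}, so the expectation operator $\E_{a\sim\pi^*}$ is not inert under $\gradnu$; one must be careful whether to differentiate through it or to appeal to the envelope-theorem structure of the dual.

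The cleaner route I would take is to first compute $\gradnu \log Z_{\lambda,\nu}(s)$ treating $s$ as fixed. By definition $Z_{\lambda,\nu}(s) = \E_{a\sim\polk}\left[\exp\left(\tfrac{1}{\lambda}(A^{\polk}(s,a) - \nu A^{\polk}_{C}(s,a))\right)\right]$, so differentiating under the integral/sum sign and dividing by $Z_{\lambda,\nu}(s)$ gives
\begin{equation*}
    \gradnu \log Z_{\lambda,\nu}(s) = \frac{1}{Z_{\lambda,\nu}(s)}\E_{a\sim\polk}\left[\left(-\tfrac{1}{\lambda}A^{\polk}_{C}(s,a)\right)\exp\left(\tfrac{1}{\lambda}(A^{\polk}(s,a)-\nu A^{\polk}_{C}(s,a))\right)\right] = -\tfrac{1}{\lambda}\E_{a\sim\pi^*}\left[A^{\polk}_{C}(s,a)\right],
\end{equation*}
where the last equality uses exactly the form \eqref{eq:FOCOPS_opt} of $\pi^*$, since $\polk(a|s)\exp(\cdots)/Z_{\lambda,\nu}(s) = \pi^*(a|s)$. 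Multiplying by $\lambda$ this is $-\E_{a\sim\pi^*}[A^{\polk}_{C}(s,a)]$. Note this already looks like the claimed expression, except the statement writes $A^{\polk}$ rather than $A^{\polk}_{C}$ in \eqref{eq:dual_grad} — I would flag that this is a typo in the statement and the intended right-hand side is $\Tilde{b} - \E_{s\sim\dpolk,\,a\sim\pi^*}[A^{\polk}_{C}(s,a)]$, which is precisely the (negated, scaled) amount of cost-constraint slack, making the primal-dual interpretation in the surrounding text go through.

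The one remaining gap is the dependence of the outer expectation $\E_{a\sim\pi^*}[\log Z_{\lambda,\nu}(s)]$ on $\nu$ through $\pi^*$: a full product-rule expansion would also produce a term $\lambda\E_{a\sim\pi^*}\left[(\gradnu\log\pi^*(a|s))\log Z_{\lambda,\nu}(s)\right]$. I expect the main obstacle to be handling this cross term cleanly. The quickest resolution is to substitute \eqref{eq:FOCOPS_opt} into $L$ first and recognize $L(\pi^*,\lambda,\nu)$ as the value of the partially-minimized Lagrangian, so that by the envelope theorem (the policy $\pi^*$ being the argmax of the inner problem for the given $\lambda,\nu$) the derivative with respect to $\nu$ only picks up the explicit $\nu$-dependence, which is the coefficient of $\nu$ in the Lagrangian, namely $\Tilde b - \E_{s\sim\dpolk,a\sim\pi^*}[A^{\polk}_C(s,a)]/(1-\gamma)$ up to the scaling conventions fixed in Theorem~\ref{thm:FOCOPS_opt_pol}. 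Alternatively, one can verify by direct computation that the spurious cross term vanishes because $\E_{a\sim\pi^*}[\gradnu\log\pi^*(a|s)] = \gradnu\E_{a\sim\pi^*}[1] = 0$ pointwise — the standard score-function identity — once one writes $\gradnu\log\pi^*(a|s) = -\tfrac1\lambda A^{\polk}_C(s,a) - \gradnu\log Z_{\lambda,\nu}(s)$ and checks that the $\log Z$ factor, being constant in $a$, pulls out and multiplies $\E_{a\sim\pi^*}[\gradnu\log\pi^*(a|s)]=0$. Either way the cross term drops and \eqref{eq:dual_grad} (with $A^{\polk}_C$) follows; I would present the envelope-theorem version for brevity and relegate the score-function verification to a remark.
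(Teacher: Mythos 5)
Your proposal is correct and lands on the same identity that the paper's appendix actually derives, namely $\frac{\partial L(\pi^*,\lambda,\nu)}{\partial\nu}=\Tilde{b}-\E_{s\sim\dpolk,\,a\sim\pi^*}[A^{\polk}_{C}(s,a)]$; your flag about the missing cost subscript is right: the paper's own computation (its \eqref{eq:logz_grad} and \eqref{eq:exp_grad_final}) produces $-\E[A^{\polk}_{C}]$ for the last term, so the $A^{\polk}$ in the printed statement \eqref{eq:dual_grad} is a typo, and the subsequent "set the term to zero" heuristic works in either reading since both advantages have zero mean under $\polk$. Where you differ from the paper is in handling the $\nu$-dependence of $\pi^*$ inside the expectation: the paper differentiates through it explicitly, computing $\partial\pi^*(a|s)/\partial\nu$, rewriting $\E_{a\sim\pi^*}[\log Z_{\lambda,\nu}(s)]$ via importance sampling against $\polk$, expanding by the product rule, and then observing that the two extra terms cancel once $\frac{\partial\log Z_{\lambda,\nu}(s)}{\partial\nu}=-\frac{1}{\lambda}\E_{a\sim\pi^*}[A^{\polk}_{C}(s,a)]$ is substituted; you instead dispose of that dependence structurally, either by the Danskin/envelope argument (the dual is a partial maximum, so only the explicit $\nu$-dependence contributes) or by the score-function identity, noting that $\log Z_{\lambda,\nu}(s)$ is constant in $a$ so the cross term equals $\log Z_{\lambda,\nu}(s)\cdot\E_{a\sim\pi^*}[\gradnu\log\pi^*(a|s)]=0$. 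Both arguments share the same key computation of $\partial_\nu\log Z$; yours is shorter and explains why the cancellation must happen, while the paper's expansion verifies it by hand without appealing to envelope-theorem hypotheses. One small nit: in your envelope step the coefficient of $\nu$ in the Lagrangian \eqref{eq:FOCOPS_lagrange} is $\Tilde{b}-\E_{s\sim\dpolk,\,a\sim\pi^*}[A^{\polk}_{C}(s,a)]$ with no residual $1/(1-\gamma)$ factor, since that factor is already absorbed into $\Tilde{b}=(1-\gamma)(b-J_C(\polk))$; your hedge about scaling conventions and your direct computation nonetheless arrive at the correct constant.
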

\begin{proof}
See Appendix \ref{append:lagrange_grad} of the supplementary materials.
\end{proof}
The last term in the gradient expression in Equation \eqref{eq:dual_grad} cannot be evaluated since we do not have access to $\pi^*$. However since $\polk$ and $\pi^*$ are 'close' (by constraint \eqref{eq:FOCOPS_tr_const}), it is reasonable to assume that $E_{s\sim\dpolk,a\sim\pi^*}[A^{\polk}(s,a)]\approx E_{s\sim\dpolk,a\sim\polk}[A^{\polk}(s,a)]=0$. In practice we find that this term can be set to zero which gives the update term:
\begin{equation}\label{eq:nu_update}
    \nu\gets\proj_{\nu}\left[\nu -
    \alpha(b - J_C(\polk))\right]
\end{equation}
where $\alpha$ is the step size, here we have incorporated the discount term $(1-\gamma)$ in $\Tilde{b}$ into the step size. The projection operator $\proj_{\nu}$ projects $\nu$ back into the interval $[0,\nu_{\max}]$ where $\nu_{\max}$ is chosen so that $\nu$ does not become too large. However we will show in later sections that FOCOPS is generally insensitive to the choice of $\nu_{\max}$ and setting $\nu_{\max}=+\infty$ does not appear to greatly reduce performance. Practically, $J_C(\polk)$ can be estimated via Monte Carlo methods using trajectories collected from $\polk$. We note that the update rule in Equation \eqref{eq:nu_update} is similar in to the update rule introduced in \citet{chow2017risk}. We recall that in \eqref{eq:FOCOPS_opt}, $\nu$ acts as a cost penalty term where increasing $\nu$ makes it less likely for state-action pairs with higher costs to be sampled by $\pi^*$. Hence in this regard, the update rule in \eqref{eq:nu_update} is intuitive in that it increases $\nu$ if $J_C(\polk)>b$ (i.e. the cost constraint is violated for $\polk$) and decreases $\nu$ otherwise. Using the update rule \eqref{eq:nu_update}, we can then perform one update step on $\nu$ before updating the policy parameters $\theta$.

Our method is a first-order method, so the approximations that we make is only accurate near the initial condition (i.e. $\pi_{\theta}=\pi_{\theta_k}$). In order to better enforce this we also add to \eqref{eq:loss_grad} a per-state acceptance indicator function $I(s_j):=\1_{\KL{\pol}{\polk}[s_j]\leq\delta}$. This way sampled states whose $\KL{\pol}{\polk}[s]$ is too large are rejected from the gradient update. The resulting sample gradient update term is
\begin{equation}\label{eq:grad_est}
    \hat{\nabla}_{\theta} \cL(\theta)
    \approx \frac{1}{N}\sum_{j=1}^N\bigg[\grad\KL{\pol}{\polk}[s_j] -\frac{1}{\lambda}\frac{\grad\pol(a_j|s_j)}{\polk(a_j|s_j)}\bigg(\hat{A}(s_j, a_j) - \nu\hat{A}_{C}(s_j,a_j) \bigg)\bigg]I(s_j).
\end{equation}
Here $N$ is the number of samples we collected using policy $\polk$, $\hat{A}$ and $\hat{A}_{C}$ are estimates of the advantage functions (for the return and cost) obtained from critic networks. We estimate the advantage functions using the Generalized Advantage Estimator (GAE) \citep{schulman2016high}. We can then apply stochastic gradient descent using Equation \eqref{eq:grad_est}. During training, we use the early stopping criteria $\frac{1}{N}\sum_{i=1}^N\KL{\pi_{\theta}}{\polk}[s_i]>\delta$ which helps prevent trust region constraint violation for the new updated policy. We update the parameters for the value net by minimizing the Mean Square Error (MSE) of the value net output and some target value (which can be estimated via Monte Carlo or bootstrap estimates of the return). We emphasize again that FOCOPS only requires first order methods (gradient descent) and is thus extremely simple to implement. 

Algorithm \ref{alg:FOCOPS_outline} presents a summary of the FOCOPS algorithm. A more detailed pseudocode is provided in Appendix \ref{append:pseudocode} of the supplementary materials.

\begin{algorithm}[h]
\algtext*{End}
\begin{algorithmic}[1]
\caption{FOCOPS Outline}
\label{alg:FOCOPS_outline}
\INPUT{Policy network $\pi_{\theta_0}$, Value networks $V_{\phi_0}$, $V_{\psi_0}^{C}$.} 
\While{Stopping criteria not met}
    \State Generate trajectories $\tau\sim\polk$.
    \State Estimate $C$-returns and advantage functions.
	\State Update $\nu$ using Equation \eqref{eq:nu_update}.
	\For{$K$ epochs}
	    \For{each minibatch}
	        \State Update value networks by minimizing MSE of $V_{\phi_k}$, $V^{\text{target}}_{\phi_k}$ and  $V_{\psi_k}^{C}$, $V^{C,\text{target}}_{\psi_k}$.
	        \State Update policy network using Equation \eqref{eq:grad_est}
	    \EndFor
	   \If{$\frac{1}{N}\sum_{j=1}^N\KL{\pol}{\polk}[s_j]>\delta$}
    	        \State Break out of inner loop
    	\EndIf
	\EndFor
\EndWhile
\end{algorithmic}
\end{algorithm}

\section{Experiments}\label{sec:experiments}
We designed two different sets of experiments to test the efficacy of the FOCOPS algorithm. In the first set of experiments, we train different robotic agents to move along a straight line or a two dimensional plane, but the speed of the robot is constrained for safety purposes. The second set of experiments is inspired by the Circle experiments from \citet{achiam2017constrained}.  Both sets of experiments are implemented using the OpenAI Gym API \citep{brockman2016openai} for the MuJoCo physical simulator \citep{todorov2012mujoco}. Implementation details for all experiments can be found in the supplementary material.

In addition to the CPO algorithm, we are also including for comparison two algorithms based on Lagrangian methods \citep{bertsekas1997nonlinear}, which uses adaptive penalty coefficients to enforce
constraints. For an objective function $f(\theta)$ and constraint $g(\theta)\leq 0$, the Lagrangian method solves max-min optimization problem $\max_{\theta}\min_{\nu\geq 0} (f(\theta) - \nu g(\theta))$. These methods first perform gradient ascent on $\theta$, and then gradient descent on $\nu$. \cite{chow2019lyapunov} and  \cite{ray2019benchmarking} combined Lagrangian method with PPO \citep{schulman2017proximal} and TRPO \citep{schulman2015trust} to form the PPO Lagrangian and TRPO Lagrangian algorithms, which we will subsequently abbreviate as PPO-L and TRPO-L respectively. Details for these two algorithms can be found in the supplementary material.

\subsection{Robots with Speed Limit}

\begin{figure}[h]
    \centering
    \includegraphics[width=\linewidth]{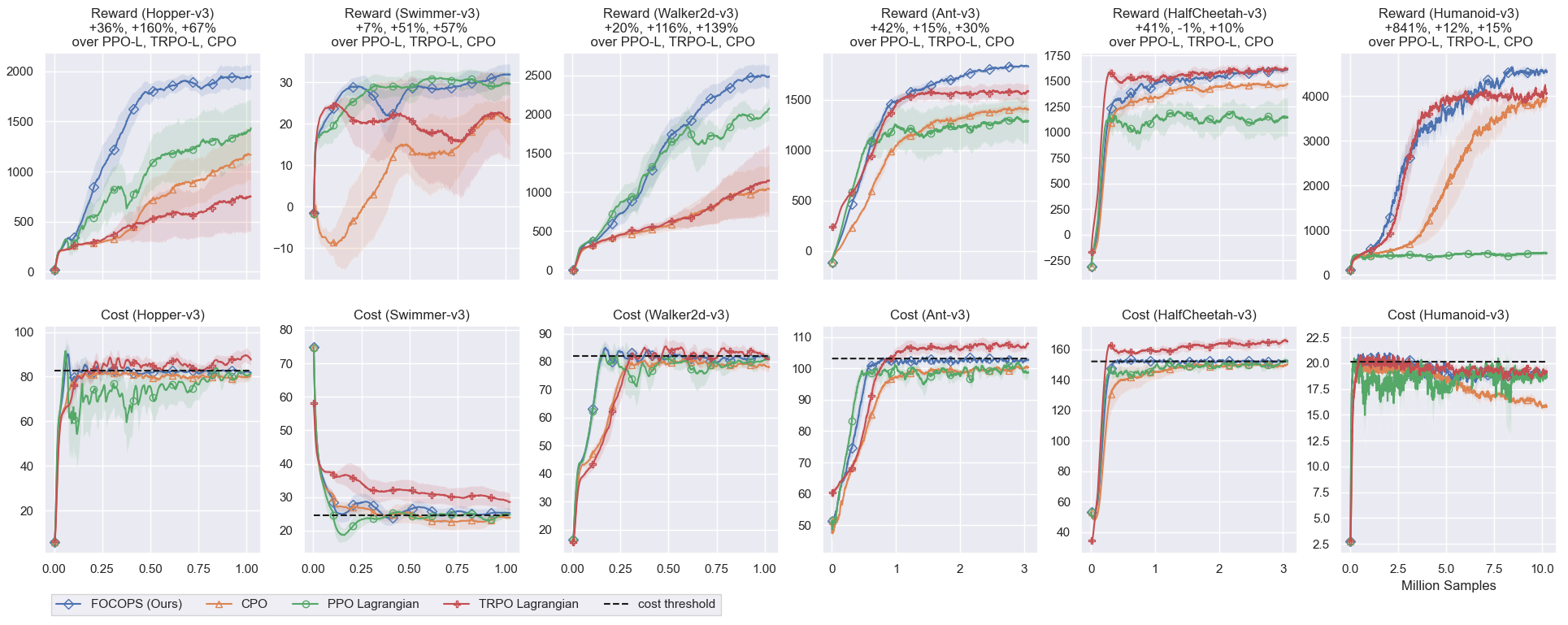}
    \caption{Learning curves for robots with speed limit tasks. The $x$-axis represent the number of samples used and the $y$-axis represent the average total reward/cost return of the last 100 episodes. The solid line represent the mean of 1000 bootstrap samples over 10 random seeds. The shaded regions represent the bootstrap normal $95\%$ confidence interval. FOCOPS  consistently enforce approximate constraint satisfaction while having a higher performance on five out of the six tasks.}
    \label{fig:mujoco_plot}
\end{figure}

We consider six MuJoCo environments where we attempt to train a robotic agent to walk. However we impose a speed limit on our environments. The cost thresholds are calculated using 50\% of the speed attained by an unconstrained PPO agent after training for a million samples (Details can be found in Appendix \ref{append:robots}).

Figure \ref{fig:mujoco_plot} shows that FOCOPS outperforms other baselines in terms of reward on most tasks while enforcing the cost constraint. In theory, FOCOPS assumes that the initial policy is feasible. This assumption is violated in the Swimmer-v3 environment. However in practice, the gradient update term increases the dual variable associated with the cost when the cost constraint is violated, this would result in a feasible policy after a certain number of iterations. We observed that this is indeed the case with the swimmer environment (and similarly the AntCircle environment in the next section). Note also that Lagrangian methods outperform CPO on several environments in terms of reward, this is consistent with the observation made by \citet{ray2019benchmarking} and \citet{stooke2020responsive}. However on most tasks TRPO-L does not appear to consistently maintain constraint satisfaction during training. For example on HalfCheetah-v3, even though TRPO-L outperforms FOCOPS in terms of total return, it violates the cost constraint by nearly 9\%. PPO-L is shown to do well on simpler tasks but performance deteriorates drastically on the more challenging environments (Ant-v3, HalfCheetah-v3, and Humanoid-v3), this is in contrast to FOCOPS which perform particularly well on these set of tasks. In Table \ref{tab:mujoco_velocity_results} we summarized the performance of all four algorithms.
\vskip 0.1in
\begin{table}[h]
    \centering
    \caption{Bootstrap mean and normal $95\%$ confidence interval with 1000 bootstrap samples over 10 random seeds of reward/cost return after training on robot with speed limit environments. Cost thresholds are in brackets under the environment names.}
    \vskip 0.1in
    \begin{adjustbox}{width=\textwidth}
    \begin{tabular}{*6c}
    \toprule
    Environment   & {} & PPO-L  & TRPO-L & CPO & \textbf{FOCOPS} \\
    \midrule
    Ant-v3 &Reward  & $1291.4\pm 216.4$ & $1585.7\pm 77.5$ & $1406.0\pm 46.6$ & $\boldsymbol{1830.0\pm 22.6}$ \\
    (103.12) &Cost  & $98.78\pm 1.77$ & $107.82\pm 1.16$ & $100.25\pm 0.67$ & $102.75\pm 1.08$ \\
    \midrule
    HalfCheetah-v3 &Reward  & $1141.3\pm 192.4$ & $\boldsymbol{1621.59\pm 39.4}$ & $1470.8\pm 40.0$ & $1612.2\pm 25.9$ \\
    (151.99) &Cost  & $151.53\pm 1.88$ & $164.93\pm 2.43$ & $150.05\pm 1.40$ & $152.36\pm 1.55$ \\
    \midrule
    Hopper-v3 &Reward  & $1433.8\pm 313.3$ & $750.3\pm 355.3$ & $1167.1\pm 257.6$ & $\boldsymbol{1953.4\pm 127.3}$ \\
    (82.75) &Cost  & $81.29\pm 2.34$ & $87.57\pm 3.48$ & $80.39\pm 1.39$ & $81.84\pm 0.92$ \\
    \midrule
    Humanoid-v3 &Reward  & $471.3\pm 49.0$ & $4062.4\pm 113.3$ & $3952.7\pm 174.4$ & $\boldsymbol{4529.7\pm 86.2}$ \\
    (20.14) &Cost  & $18.89\pm 0.77$ & $19.23\pm 0.76$ & $15.83\pm 0.41$ & $18.63\pm 0.37$ \\
    \midrule
    Swimmer-v3 &Reward  & $29.73\pm 3.13$ & $21.15\pm 9.56$ & $20.31\pm 6.01$ & $\boldsymbol{31.94\pm 2.60}$ \\
    (24.52) &Cost  & $24.72\pm 0.85$ & $28.57\pm 2.68$ & $23.88\pm 0.64$ & $25.29\pm 1.49$ \\
    \midrule
    Walker2d-v3 &Reward  & $2074.4\pm 155.7$ & $1153.1\pm 473.3$ & $1040.0\pm 303.3$ & $\boldsymbol{2485.9\pm 158.3}$ \\
    (81.89) &Cost  & $81.7\pm 1.14$ & $80.79\pm 2.13$ & $78.12\pm 1.78$ & $81.27\pm 1.33$ \\
    \bottomrule
    \end{tabular}
    \end{adjustbox}
    \label{tab:mujoco_velocity_results}
\end{table}

\subsection{Circle Tasks}

For these tasks, we use the same exact geometric setting, reward, and cost constraint function as \citet{achiam2017constrained}, a geometric illustration of the task and details on the reward/cost functions can be found in Appendix \ref{append:circle_experiments} of the supplementary materials. The goal of the agents is to move along the circumference of a circle while remaining within a safe region smaller than the radius of the circle. 

Similar to the previous tasks, we provide learning curves (Figure \ref{fig:circle_plot}) and numerical summaries (Table \ref{tab:circle_results}) of the experiments. We also plotted an unconstrained PPO agent for comparison. On these tasks, all four approaches are able to approximately enforce cost constraint satisfaction (set at 50), but FOCOPS does so while having a higher performance. Note for both tasks, the $95\%$ confidence interval for FOCOPS lies above the confidence intervals for all other algorithms, this is strong indication that FOCOPS outperforms the other three algorithms on these particular tasks.
\begin{figure}[H]
    \centering
    \includegraphics[width=\textwidth]{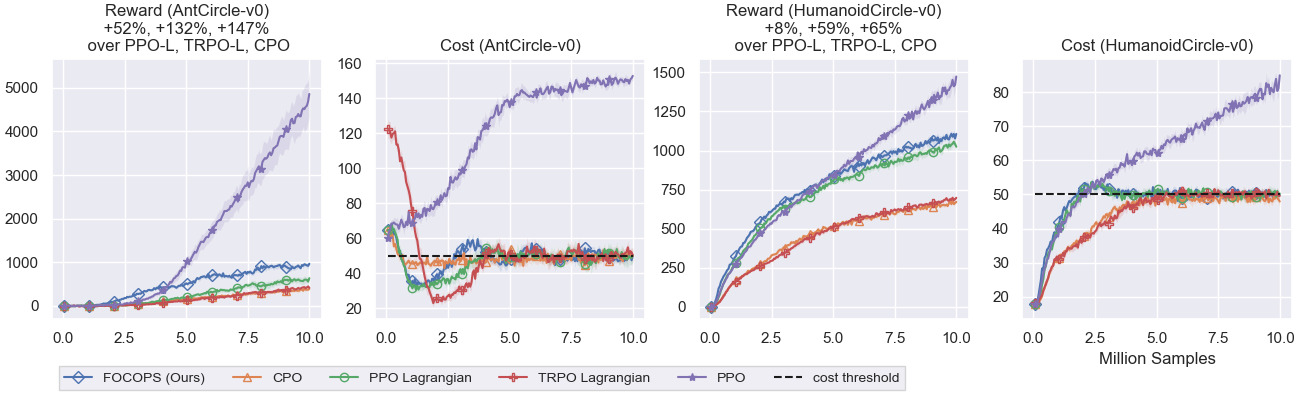}
    \caption{Comparing reward and cost returns on circle Tasks. The $x$-axis represent the number of samples used and the $y$-axis represent the average total reward/cost return of the last 100 episodes. The solid line represent the mean of 1000 bootstrap samples over 10 random seeds. The shaded regions represent the bootstrap normal $95\%$ confidence interval. An unconstrained PPO agent is also plotted for comparison.}
    \label{fig:circle_plot}
\end{figure}
\vskip 0.1in
\begin{table}[h]
    \centering
    \caption{Bootstrap mean and normal $95\%$ confidence interval with 1000 bootstrap samples over 10 random seeds of reward/cost return after training on circle environments for 10 million samples. Cost thresholds are in brackets under the environment names.}
    \vskip 0.1in
    \begin{adjustbox}{width=\textwidth}
    \begin{tabular}{*6c}
    \toprule
    Environment & {} & PPO-L  & TRPO-L & CPO & \textbf{FOCOPS} \\
    \midrule
    Ant-Circle &Reward  & $637.4\pm 88.2$ & $416.7\pm 42.1$ & $390.9\pm 43.9$ & $\boldsymbol{965.9\pm 46.2}$ \\
    (50.0) &Cost  & $50.4\pm 4.4$ & $50.4\pm 3.9$ & $50.0\pm 3.5$ & $49.9\pm 2.2$ \\
    \midrule
    Humanoid-Circle &Reward  & $1024.5\pm 23.4$ & $697.5\pm 14.0$ & $671.0\pm 12.5$ & $\boldsymbol{1106.1\pm 32.2}$ \\
    (50.0) &Cost  & $50.3\pm 0.59$ & $49.6\pm 0.96$ & $47.9\pm 1.5$ & $49.9\pm 0.8$ \\
    \bottomrule
    \end{tabular}
    \end{adjustbox}
    \label{tab:circle_results}
\end{table}

\subsection{Generalization Analysis}
In supervised learning, the standard approach is to use separate datasets for training, validation, and testing where we can then use some evaluation metric on the test set to determine how well an algorithm generalizes over unseen data. However such a scheme is not suitable for reinforcement learning.

To similarly evaluate our reinforcement learning agent, we first trained our agent on a fixed random seed. We then tested the trained agent on ten unseen random seeds \citep{pineau18reproducible}. We found that with the exception of Hopper-v3, FOCOPS outperformed every other constrained algorithm on all robots with speed limit environments. Detailed analysis of the generalization results are provided in Appendix \ref{append:generalization} of the supplementary materials.

\subsection{Sensitivity Analysis}\label{sec:sensitivity}
The Lagrange multipliers play a key role in the FOCOPS algorithms, in this section we explore the sensitivity of the hyperparameters $\lambda$ and $\nu_{\max}$. We find that the performance of FOCOPS is largely insensitive to the choice of these hyperparamters. To demonstrate this, we conducted a series of experiments on the robots with speed limit tasks.

The hyperparameter $\nu_{\max}$ was selected via hyperparameter sweep on the set $\{1,2,3,5,10,+\infty\}$. However we found that FOCOPS is not sensitive to the choice of $\nu_{\max}$ where setting $\nu_{\max}=+\infty$ only leads to an average $0.3\%$ degradation in performance compared to the optimal $\nu_{\max}=2$. Similarly we tested the performance of FOCOPS against different values of $\lambda$ where for other values of $\lambda$, FOCOPS performs on average $7.4\%$ worse compared to the optimal $\lambda=1.5$. See Appendix \ref{append:sensitive_analysis} of the supplementary materials for more details.

\section{Discussion}
We introduced FOCOPS\textemdash a simple first-order approach for training RL agents with safety constraints. FOCOPS is theoretically motivated and is shown to empirically outperform more complex second-order methods. FOCOPS is also easy to implement. We believe in the value of simplicity as it makes RL more accessible to researchers in other fields who wish to apply such methods in their own work. Our results indicate that constrained RL is an effective approach for addressing RL safety and can be efficiently solved using our two step approach. 

There are a number of promising avenues for future work: such as incorporating off-policy data; studying how our two-step approach can deal with different types of constraints such as sample path constraints \citep{ross1989markov,ross1991multichain}, or safety constraints expressed in other more natural forms such as human preferences \citep{christiano2017deep} or natural language \citep{luketina2019survey}.

\section{Broader Impact}
Safety is a critical element in real-world applications of RL. We argue in this paper that scalar reward signals alone is often insufficient in motivating the agent to avoid harmful behavior. An RL systems designer needs to carefully balance how much to encourage desirable behavior and how much to penalize unsafe behavior where too much penalty could prevent the agent from sufficient exploration and too little could lead to hazardous consequences. This could be extremely difficult in practice. Constraints are a more natural way of quantifying safety requirements and we advocate for researchers to consider including constraints in safety-critical RL systems.

\begin{ack}
The authors would like to express our appreciation for the technical support provided by the NYU Shanghai High Performance Computing (HPC) administrator Zhiguo Qi and the HPC team at NYU. We would also like to thank Joshua Achiam for his insightful comments and for making his implementation of the CPO algorithm publicly available.
\end{ack}

\bibliographystyle{abbrvnat}
\bibliography{focops}

\newpage

\noindent\rule{\textwidth}{4pt}
\begin{center}
    \LARGE\bf Supplementary Material for First Order Constrained Optimization in Policy Space
\end{center}
\noindent\rule{\textwidth}{1pt}

\begin{appendices}

\section{Proof of Theorem \ref{thm:FOCOPS_opt_pol} }\label{append:thm}
\FOCOPSsol*

\begin{proof}
We will begin by showing that Problem (\ref{eq:FOCOPS_obj}-\ref{eq:FOCOPS_tr_const}) is convex w.r.t. $\pi=\{\pi(a|s):s\in\cS, a\in\cA\}$. First note that the objective function is linear w.r.t. $\pi$. Since $J_{C}(\polk)$ is a constant w.r.t. $\pi$, constraint \eqref{eq:FOCOPS_cost_const} is linear. Constraint \eqref{eq:FOCOPS_tr_const} can be rewritten as $\sum_{s}\dpolk(s)\KL{\pi}{\polk}[s]\leq \delta$, the KL divergence is convex w.r.t. its first argument, therefore constraint \eqref{eq:FOCOPS_tr_const} which is a linear combination of convex functions is also convex. Since $\polk$ satisfies Constraint \eqref{eq:FOCOPS_cost_const} and is also an interior point within the set given by Constraints \eqref{eq:FOCOPS_tr_const} ($\KL{\polk}{\polk} = 0$, and $\delta>0$), therefore Slater's constraint qualification holds, strong duality holds.

We can therefore solve for the optimal value of Problem (\ref{eq:FOCOPS_obj}-\ref{eq:FOCOPS_tr_const}) $p^*$ by solving the corresponding dual problem. Let
\begin{small}
\begin{equation}\label{eq:FOCOPS_lagrange}
        L(\pi,\lambda,\nu)=\lambda\delta+\nu\Tilde{b}+\E_{s\sim d^{\polk}}\bigg[\E_{a\sim\pi(\cdot|s)}[A^{\polk}(s,a)] - \nu\E_{a\sim\pi(\cdot|s)}[A^{\polk}_{C}(s,a)] \ - \lambda\KL{\pi}{\polk}[s] \bigg]
\end{equation}
\end{small}
Therefore,
\begin{equation}\label{eq:min_max}
   p^*=  \max_{\pi\in\Pi}\min_{\lambda,\nu\geq 0}  L(\pi,\lambda,\nu) =\min_{\lambda,\nu\geq 0}\max_{\pi\in\Pi}L(\pi,\lambda,\nu) 
\end{equation}
where we invoked strong duality in the second equality. We note that if $\pi^*,\lambda^*,\nu^*$ are optimal for $\eqref{eq:min_max}$, $\pi^*$ is also optimal for Problem (\ref{eq:FOCOPS_obj}-\ref{eq:FOCOPS_tr_const}) \citep{boyd2004convex}.

Consider the inner maximization problem in \eqref{eq:min_max}, we can decompose this problem into separate problems, one for each $s$. This gives us an optimization problem of the form,
\begin{equation}\label{eq:focops_each_s}
\begin{aligned}
\underset{\pi}{\text{maximize}} \quad
& \E_{a\sim \pi(\cdot|s)} \bigg[A^{\polk}(s,a) - \nu A^{\polk}_{C}(s,a) -\lambda(\log\pi(a|s)-\log\polk(a|s)) \bigg] \\
\text{subject to} \quad
& \sum_a \pi(a|s) = 1 \\
& \pi(a|s)\geq 0\quad\text{for all }a\in\cA
\end{aligned}   
\end{equation}
which is equivalent to the inner maximization problem in \eqref{eq:min_max}. This is clearly a convex optimization problem which we can solve using a simple Lagrangian argument. We can write the Lagrangian of \eqref{eq:focops_each_s} as
\begin{equation}
 G(\pi) = \sum_a \pi(a|s)\bigg[A^{\polk}(s,a) - \nu A^{\polk}_{C}(s,a) -\lambda(\log\pi(a|s)-\log\polk(a|s))+\zeta \bigg]  - 1
\end{equation}
where $\zeta>0$ is the Lagrange multiplier associated with the constraint $\sum_a\pi(a|s)=1$. Differentiating $G(\pi)$ w.r.t. $\pi(a|s)$ for some $a$:
\begin{equation}\label{eq:lagrange_grad}
    \frac{\partial G}{\partial \pi(a|s)}= A^{\polk}(s,a) - \nu A^{\polk}_{C}(s,a)-\lambda(\log\pi(a|s)+1-\log\polk(a|s))+\zeta
\end{equation}
Setting \eqref{eq:lagrange_grad} to zero and rearranging the term, we obtain
\begin{equation}
    \pi(a|s) = \polk(a|s)\exp\left(\frac{1}{\lambda}\left(A^{\polk}(s,a)-\nu A^{\polk}_{C}(s,a)\right)+\frac{\zeta}{\lambda}+1\right)
\end{equation}
We chose $\zeta$ so that $\sum_a\pi(a|s)=1$ and rewrite $\zeta/\lambda+1$ as $Z_{\lambda,\nu}(s)$. We find that the optimal solution $\pi^*$ to \eqref{eq:focops_each_s} takes the form \begin{equation*}
        \pi^{*}(a|s) = \frac{\polk(a|s)}{Z_{\lambda,\nu}(s)}\exp\left(\frac{1}{\lambda}\left(A^{\polk}(s,a)-\nu A^{\polk}_{C}(s,a) \right)\right)
\end{equation*}
Plugging $\pi^*$ back into Equation \ref{eq:min_max} gives us
\begin{equation*}
    \begin{aligned}
        p^* &= \min_{\lambda,\nu\geq 0}\lambda\delta+\nu\Tilde{b}+\E_{\substack{s\sim\dpolk\\ a\sim\pi^*}}[A^{\polk}(s,a)-\nu A_{C}^{\polk}(s,a) -\lambda(\log\pi^*(a|s)-\log\polk(a|s))] \\
        &= \min_{\lambda,\nu\geq 0}\lambda\delta+\nu\Tilde{b}+\E_{\substack{s\sim\dpolk\\ a\sim\pi^*}}[A^{\polk}(s,a)-\nu A_{C}^{\polk}(s,a) -\lambda(\log\polk(a|s) - \log Z_{\lambda,\nu}(s) \\ &+\frac{1}{\lambda}(A^{\polk}(s,a)-\nu A^{\polk}_{C}(s,a))-\log\polk(a|s))] \\
        &= \min_{\lambda,\nu\geq 0}\;\lambda\delta+\nu \Tilde{b} + \lambda\E_{\substack{s\sim\dpolk\\ a\sim\pi^*}}[\log Z_{\lambda,\nu}(s)]
    \end{aligned}
\end{equation*}
\end{proof}

\section{Proof of Corollary \ref{corollary:loss_grad}}\label{append:loss_grad}
\lossgrad*
\begin{proof}
We only need to calculate the gradient of the loss function for a single sampled $s$. We first note that,
\begin{align*}
    \KL{\pol}{\pi^*}[s] =& -\sum_a\pol(a|s)\log\pi^*(a|s) + \sum_a\pol(a|s)\log \pol(a|s) \\
    =& H(\pol,\pi^*)[s] - H(\pol)[s]
\end{align*}
where $H(\pol)[s]$ is the entropy and $H(\pol,\pi^*)[s]$ is the cross-entropy under state $s$. We expand the cross entropy term which gives us
\begin{align*}
    H(\pol,\pi^*)[s] =& -\sum_a\pol(a|s)\log\pi^*(a|s) \\
    =& -\sum_a\pol(a|s)\log\left(\frac{\polk(a|s)}{Z_{\lambda,\nu}(s)}\exp\left[\frac{1}{\lambda}\left(A^{\polk}(s,a)-\nu A^{\polk}_{C}(s,a) \right)\right]\right) \\
    =& -\sum_a\pol(a|s)\log\polk(a|s) + \log Z_{\lambda,\nu}(s) - \frac{1}{\lambda}\sum_a\pol(a|s)\left(A^{\polk}(s,a)-\nu A^{\polk}_{C}(s,a) \right)
\end{align*}
We then subtract the entropy term to recover the KL divergence:
\begin{align*}
    \KL{\pol}{\pi^*}[s] =& \KL{\pol}{\polk}[s] + \log Z_{\lambda,\nu}(s) - \frac{1}{\lambda}\sum_a\pol(a|s)\left(A^{\polk}(s,a)-\nu A^{\polk}_{C}(s,a) \right) \\
    =& \KL{\pol}{\polk}[s] + \log Z_{\lambda,\nu}(s) - \frac{1}{\lambda}\E_{a\sim\polk(\cdot|s)}\left[\frac{\pol(a|s)}{\polk(a|s)}\left(A^{\polk}(s,a)-\nu A^{\polk}_{C}(s,a) \right)\right]
\end{align*}
where in the last equality we applied importance sampling to rewrite the expectation w.r.t. $\polk$. Finally, taking the gradient on both sides gives us:
\begin{equation*}
    \grad\KL{\pol}{\pi^*}[s] = \grad\KL{\pol}{\polk}[s] - \frac{1}{\lambda}\E_{a\sim\polk(\cdot|s)}\left[\frac{\grad\pol(a|s)}{\polk(a|s)}\left(A^{\polk}(s,a)- \nu A^{\polk}_{C}(s,a) \right)\right].
\end{equation*}
\end{proof}

\section{Proof of Corollary \ref{corollary:lagrange_grad}}\label{append:lagrange_grad}
\lagrangegrad*
\begin{proof}
From Theorem 1, we have
\begin{equation}
    L(\pi^*,\lambda,\nu) = \lambda\delta+\nu\Tilde{b} + \lambda\E_{\substack{s\sim\dpolk\\ a\sim\pi^*}}[\log Z_{\lambda,\nu}(s)].
\end{equation}
The first two terms is an affine function w.r.t. $\nu$, therefore its derivative is $\Tilde{b}$. We will then focus on the expectation in the last term. To simplify our derivation, we will first calculate the derivative of $\pi^*$ w.r.t. $\nu$,
\begin{equation*}
    \begin{aligned}
        \frac{\partial\pi^*(a|s)}{\partial\nu} &= \frac{\polk(a|s)}{Z^2_{\lambda,\nu}(s)}\bigg[Z_{\lambda,\nu}(s)\gradnu\exp\left(\frac{1}{\lambda}\left(A^{\polk}(s,a)-\nu A^{\polk}_{C}(s,a)\right)\right)\\
        &-\exp\left(\frac{1}{\lambda}\left(A^{\polk}(s,a)-\nu A^{\polk}_{C}(s,a)\right)\right)\frac{\partial Z_{\lambda,\nu}(s)}{\partial\nu}\bigg] \\
        &=-\frac{A^{\polk}_{C}(s,a)}{\lambda}\pi^*(a|s)-\pi^*(a|s)\frac{\partial\log Z_{\lambda,\nu}(s) }{\partial \nu}
    \end{aligned}
\end{equation*}
Therefore the derivative of the expectation in the last term of $L(\pi^*,\lambda,\nu)$ can be written as
\begin{equation}\label{eq:exp_grad}
    \begin{aligned}
        &\gradnu \E_{\substack{s\sim\dpolk\\ a\sim\pi^*}}[\log Z_{\lambda,\nu}(s)] \\
        =& \E_{\substack{s\sim\dpolk\\ a\sim\polk}}\left[\gradnu\left(\frac{\pi^*(a|s)}{\polk(a|s)}\log Z_{\lambda,\nu}(s)\right)\right] \\
        =& \E_{\substack{s\sim\dpolk\\ a\sim\polk}}\left[\frac{1}{\polk(a|s)}\left(\frac{\partial\pi^*(a|s)}{\partial\nu}\log Z_{\lambda,\nu}(s)+\pi^*(a|s)\frac{\partial\log Z_{\lambda,\nu}(s) }{\partial \nu}\right)\right] \\
        =& \E_{\substack{s\sim\dpolk\\ a\sim\polk}}\left[\frac{\pi^*(a|s)}{\polk(a|s)}\left(-\frac{A^{\polk}_{C}(s,a)}{\lambda}\log Z_{\lambda,\nu}(s)-\frac{\partial\log Z_{\lambda,\nu}(s)}{\partial\nu}\log Z_{\lambda,\nu}(s)+\frac{\partial\log Z_{\lambda,\nu}(s)}{\partial\nu}\right)\right] \\
        =& \E_{\substack{s\sim\dpolk\\ a\sim\pi^*}}\left[-\frac{A^{\polk}_{C}(s,a)}{\lambda}\log Z_{\lambda,\nu}(s)-\frac{\partial\log Z_{\lambda,\nu}(s)}{\partial\nu}\log Z_{\lambda,\nu}(s)+\frac{\partial\log Z_{\lambda,\nu}(s)}{\partial\nu}\right].
    \end{aligned}
\end{equation}
Also,
\begin{equation}
    \begin{aligned}
        \frac{\partial Z_{\lambda,\nu}(s)}{\partial\nu} &= \gradnu\sum_a\polk(a|s)\exp\left(\frac{1}{\lambda}\left(A^{\polk}(s,a)-\nu A^{\polk}_{C}(s,a)\right)\right) \\
        &=\sum_a -\polk(a|s)\frac{A^{\polk}_{C}(s,a)}{\lambda}\exp\left(\frac{1}{\lambda}\left(A^{\polk}(s,a)-\nu A^{\polk}_{C}(s,a)\right)\right) \\
        &=\sum_a -\frac{A^{\polk}_{C}(s,a)}{\lambda}\frac{\polk(a|s)}{Z_{\lambda,\nu}(s)}\exp\left(\frac{1}{\lambda}\left(A^{\polk}(s,a)-\nu A^{\polk}_{C}(s,a)\right)\right)Z_{\lambda,\nu}(s)\\
        &= -\frac{Z_{\lambda,\nu}(s)}{\lambda}\E_{a\sim\pi^*(\cdot|s)}\left[A^{\polk}_{C}(s,a)\right].
    \end{aligned}
\end{equation}
Therefore,
\begin{equation}\label{eq:logz_grad}
    \frac{\partial\log Z_{\lambda,\nu}(s)}{\partial\nu} = \frac{\partial Z_{\lambda,\nu}(s)}{\partial\nu}\frac{1}{Z_{\lambda,\nu}(s)} = -\frac{1}{\lambda}\E_{a\sim\pi^*(\cdot|s)}\left[A^{\polk}_{C}(s,a)\right].
\end{equation}
Plugging \eqref{eq:logz_grad} into the last equality in \eqref{eq:exp_grad} gives us
\begin{equation}\label{eq:exp_grad_final}
    \begin{aligned}
        \gradnu \E_{\substack{s\sim\dpolk\\ a\sim\pi^*}}[\log Z_{\lambda,\nu}(s)] &= \E_{\substack{s\sim\dpolk\\ a\sim\pi^*}}\left[-\frac{A^{\polk}_{C}(s,a)}{\lambda}\log Z_{\lambda,\nu}(s)+\frac{A^{\polk}_{C}(s,a)}{\lambda}\log Z_{\lambda,\nu}(s)-\frac{1}{\lambda}A^{\polk}_{C}(s,a)\right] \\
        &= -\frac{1}{\lambda}\E_{\substack{s\sim\dpolk\\ a\sim\pi^*}}\left[A^{\polk}_{C}(s,a)\right].
    \end{aligned}
\end{equation}
Combining \eqref{eq:exp_grad_final} with the derivatives of the affine term gives us the final desired result.

\end{proof}

\section{PPO Lagrangian and TRPO Lagrangian}
\subsection{PPO-Lagrangian}\label{append:ppo-l}
Recall that the PPO (clipped) objective takes the form \citep{schulman2017proximal}
\begin{equation}
    L(\theta) = \min\left(\frac{\pol(a|s)}{\polk(a|s)}\advk(s,a), \clip\left(\frac{\pol(a|s)}{\polk(a|s)}, 1-\epsilon, 1+\epsilon\right)\advk(s,a) \right)
\end{equation}
We augment this objective with an additional term to form the a new objective function
\begin{equation}\label{eq:PPO_L}
    \Tilde{L}(\theta) = L(\theta) +  \nu \left(J_C(\pol) -b\right)
\end{equation}
The Lagrangian method involves a maximization and a minimization step. For the maximization step, we optimize the objective \eqref{eq:PPO_L} by performing backpropagation w.r.t. $\theta$. For the minimization step, we apply gradient descent to the same objective w.r.t $\nu$. Like FOCOPS, the PPO-Lagrangian algorithm is also first-order thus simple to implement, however its empirical performance deteriorates drastically on more challenging environments (See Section \ref{sec:experiments}). Furthermore, it remains an open question whether PPO-Lagrangian satisfies any worst-case constraint guarantees.

\subsection{TRPO-Lagrangian}\label{append:trpo-l}
Similar to the PPO-Lagrangian method, we instead optimize an augmented TRPO problem
\begin{align}
\underset{\theta}{\text{maximize}} \quad
&  \Tilde{L}(\theta) \label{eq:TRPO_L_obj}  \\
\text{subject to} \quad
& \avKL{\pol}{\polk}[s]\leq \delta. \label{eq:TRPO_L_const}
\end{align}
where
\begin{equation}
    \Tilde{L}(\theta) = \frac{\pol(a|s)}{\polk(a|s)} A^{\polk}(s,a) + \nu \left(J_C(\pol)-b\right)
\end{equation}
We then apply Taylor approximation to \eqref{eq:TRPO_L_obj} and \eqref{eq:TRPO_L_const} which gives us
\begin{align}
\underset{\theta}{\text{maximize}} \quad
&  \Tilde{g}^T (\theta-\theta_k) \label{eq:TRPO_L_taylor_obj}  \\
\text{subject to} \quad
& \frac{1}{2}(\theta-\theta_k)^T H (\theta-\theta_k)\leq\delta. \label{eq:TRPO_L_taylor_const}
\end{align}
Here $\Tilde{g}$ is the gradient of \eqref{eq:TRPO_L_obj} w.r.t. the parameter $\theta$.

Like for the PPO-Lagrangian method, we first perform a maximization step where we optimize \eqref{eq:TRPO_L_taylor_obj}-\eqref{eq:TRPO_L_taylor_const} using the TRPO method \citep{schulman2015trust}. We then perform a minimization step by updating $\nu$. TRPO-Lagrangian is also a second-order algorithm. Performance-wise TRPO-Lagrangian does poorly in terms of constraint satisfaction. Like PPO-Lagrangian, further research into the theoretical properties of TRPO merits further research.

\section{FOCOPS for Different Cost Thresholds}

In this section, we verify that FOCOPS works effectively for different threshold levels. We experiment on the robots with speed limits environments. For each environment, we calculated the cost required for an unconstrained PPO agent after training for 1 million samples. We then used $25\%$, $50\%$, and $75\%$ of this cost as our cost thresholds and trained FOCOPS on each of thresholds respectively. The learning curves are reported in Figure \ref{fig:focops_thresholds}. We note from these plots that FOCOPS can effectively learn constraint-satisfying policies for different cost thresholds.

\begin{figure}[H]
    \centering
    \includegraphics[width=\textwidth]{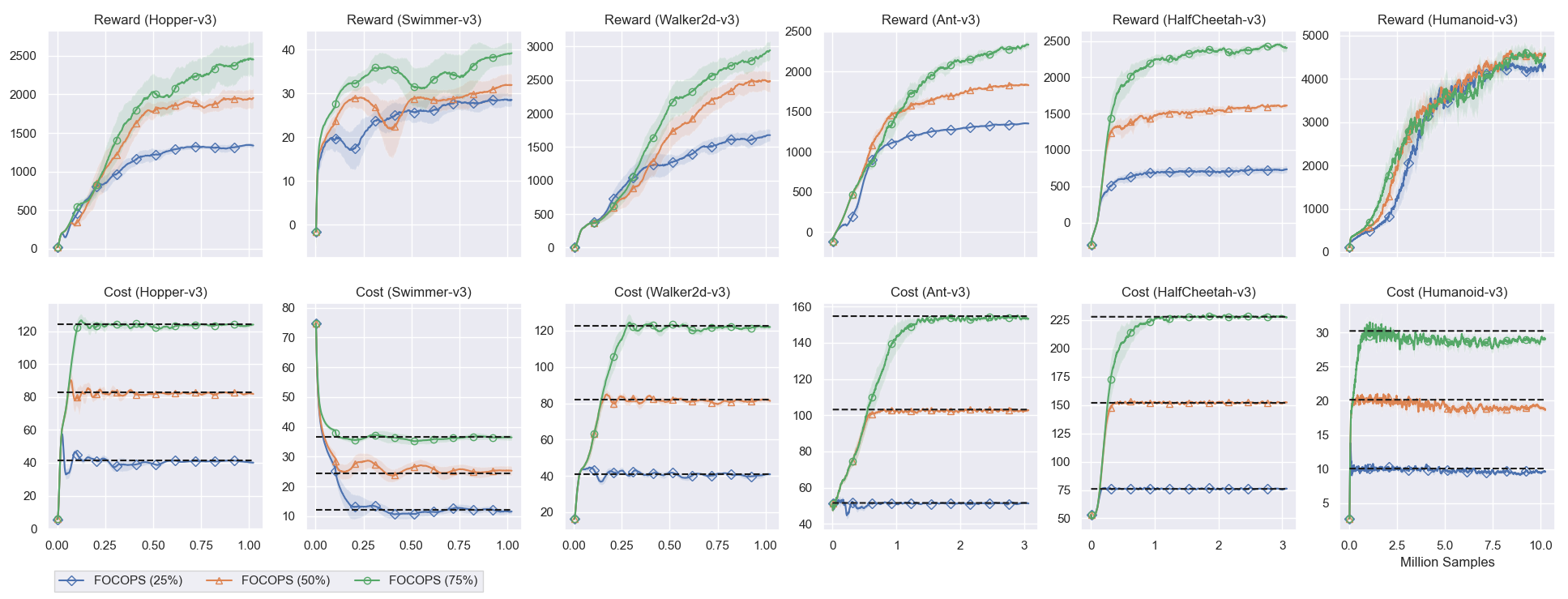}
    \caption{Performance of FOCOPS on robots with speed limit tasks with different cost thresholds. The $x$-axis represent the number of samples used and the $y$-axis represent the average total reward/cost return of the last 100 episodes. The solid line represent the mean of 1000 bootstrap samples over 10 random seeds. The horizontal lines in the cost plots represent the cost thresholds corresponding to $25\%$, $50\%$, and $75\%$ of the cost required by an unconstrained PPO agent trained with 1 million samples. Each solid line represents FOCOPS trained with the corresponding thresholds. The shaded regions represent the bootstrap normal $95\%$ confidence interval. Each of the solid lines represent}
    \label{fig:focops_thresholds}
\end{figure}

\section{Pseudocode}\label{append:pseudocode}

\begin{algorithm}[H]
\algtext*{End}
\begin{algorithmic}
\caption{First Order Constrained Optimization in Policy Space (FOCOPS)}
\INPUT{Policy network $\pol$; Value network for return $V_{\phi}$; Value network for costs $V_{\psi}^{C}$.}
\INPUT{Discount rates $\gamma$, GAE parameter $\beta$; Learning rates $\alpha_{\nu},\alpha_V,\alpha_{\pi}$;  Temperature $\lambda$; Initial cost constraint parameter $\nu$; Cost constraint parameter bound $\nu_{\max}$. Trust region bound $\delta$; Cost bound $b$.}
\While{Stopping criteria not met}
    \State Generate batch data of $M$ episodes of length $T$ from $(s_{i,t}, a_{i,t}, r_{i,t}, s_{i,t+1}, c_{i,t})$ from $\pol$, 
    \State $i=1,\dots,M$, $t=1,\dots,T$.
    \State Estimate $C$-return by averaging over $C$-return for all episodes:
    \[
    \hat{J}_{C} = \frac{1}{M}\sum_{i=1}^M\sum_{t=0}^{T-1} \gamma^t  c_{i,t}
    \]
	\State Store old policy $\theta'\gets\theta$
	\State Estimate advantage functions $\hat{A}_{i,t}$ and $\hat{A}_{i,t}^{C}$, $i=1,\dots,M$, $t=1,\dots,T$ using GAE.
	\State Get $V^{\text{target}}_{i,t}=\hat{A}_{i,t}+V_{\phi}(s_{i,t})$ and $V^{C, \text{target}}_{i,t}=\hat{A}_{i,t}+V_{\psi}^C(s_{i,t})$
	\State Update $\nu$ by 
	\[
	\nu\gets\proj_{\nu}\left[\nu - \alpha_{\nu}\left(b-\hat{J}_C\right)\right]
	\]
	\For{$K$ epochs}
	    \For{each minibatch $\{s_j, a_j, A_j, A^C_j,V^{\text{target}}_j,V^{C, \text{target}}_j\}$ of size $B$}
	    \State Value loss functions
	    \begin{align*}
	        \cL_V(\phi) &= \frac{1}{2N}\sum_{j=1}^B (V_{\phi}(s_j)-V^{\text{target}}_j)^2 \\
	        \cL_{V^C}(\psi) &= \frac{1}{2N}\sum_{j=1}^B (V_{\psi}(s_j)-V^{C,\text{target}}_j)^2
	    \end{align*}
	    \State Update value networks
	    \begin{align*}
	        \phi &\gets\phi -\alpha_V\nabla_{\phi}\cL_V(\phi) \\
	        \psi &\gets\psi -\alpha_V\nabla_{\psi}\cL_{V^C}(\psi) \\
	    \end{align*}
	    \State Update policy
	    \[
	    \theta \gets\theta -\alpha_{\pi}\hat{\nabla}_{\theta} \cL_{\pi}(\theta)
	    \]
	    \State where
	    \[
	    \hat{\nabla}_{\theta} \cL_{\pi}(\theta) 
    \approx \frac{1}{B}\sum_{j=1}^B\bigg[\grad\KL{\pol}{\pi_{\theta'}}[s_j] -\frac{1}{\lambda}\frac{\grad\pol(a_j|s_j)}{\pi_{\theta'}(a_j|s_j)}\bigg(\hat{A}_j - \nu\hat{A}^{C}_j \bigg)\bigg]\1_{\KL{\pol}{\pi_{\theta'}}[s_j]\leq\delta}
	    \]
	    \EndFor
	    \If{$\frac{1}{MT}\sum_{i=1}^M\sum_{t=0}^{T-1}\KL{\pol}{\pi_{\theta'}}[s_{i,t}]>\delta$}
	        \State Break out of inner loop
	    \EndIf
	\EndFor
\EndWhile
\end{algorithmic}
\end{algorithm}

\section{Implementation Details for Experiments}\label{append:experiments}

Our open-source implementation of FOCOPS can be found at  \url{https://github.com/ymzhang01/focops}. All experiments were implemented in Pytorch 1.3.1 and Python 3.7.4 on Intel Xeon Gold 6230 processors. We used our own Pytorch implementation of CPO based on \url{https://github.com/jachiam/cpo}. For PPO, PPO Lagrangian, TRPO Lagrangian, we used an optimized PPO and TRPO implementation based on \url{https://github.com/Khrylx/PyTorch-RL}, \url{https://github.com/ikostrikov/pytorch-a2c-ppo-acktr-gail}, and \url{https://github.com/ikostrikov/pytorch-trpo}.

\subsection{Robots with Speed Limit}\label{append:robots}

\subsubsection{Environment Details}
We used the MuJoCo environments provided by OpenAI Gym \cite{brockman2016openai} for this set of experiments. For agents manuvering on a two-dimensional plane, the cost is calculated as
\[
C(s,a) = \sqrt{v_x^2 + v_y^2}
\]
For agents moving along a straight line, the cost is calculated as
\[
C(s, a) = |v_x|
\]
where $v_x,v_y$ are the velocities of the agent in the $x$ and $y$ directions respectively.

\subsubsection{Algorithmic Hyperparameters}

We used a two-layer feedforward neural network with a $\tanh$ activation for both our policy and value networks. We assume the policy is Gaussian with independent action dimensions. The policy networks outputs a mean vector and a vector containing the state-independent log standard deviations. States are normalized by the running mean the running standard deviation before being fed to any network. The advantage values are normalized by the batch mean and batch standard deviation before being used for policy updates. Except for the learning rate for $\nu$ which is kept fixed, all other learning rates are linearly annealed to 0 over the course of training. Our hyperparameter choices are based on the default choices in the implementations cited at the beginning of the section. For FOCOPS, PPO Lagrangian, and TRPO Lagrangian, we tuned the value of $\nu_{\max}$ across $\{1,2,3,5,10,+\infty\}$ and used the best value for each algorithm. However we found all three algorithms are not especially sensitive to the choice of $\nu_{\max}$. Table \ref{tab:hyperparameters} summarizes the hyperparameters used in our experiments.
\begin{table}[h]
    \centering
    \caption{Hyperparameters for robots with speed limit experiments}
    \vskip 0.15in
    \begin{tabular}{l l l l l l}
    \toprule
       Hyperparameter & PPO & PPO-L & TRPO-L & CPO & FOCOPS \\
        \midrule
    No. of hidden layers & 2 & 2 & 2 & 2 & 2 \\
      No. of hidden nodes  & 64 & 64 & 64 & 64 & 64 \\
      Activation & $\tanh$ & $\tanh$ & $\tanh$ & $\tanh$ & $\tanh$\\
      Initial log std & -0.5 & -0.5 & -1 & -0.5 & -0.5\\
      Discount for reward $\gamma$ & 0.99 & 0.99 & 0.99 & 0.99 & 0.99 \\
      Discount for cost $\gamma_{C}$ & 0.99 & 0.99 & 0.99 & 0.99 & 0.99 \\
      Batch size & 2048 & 2048 & 2048 & 2048 & 2048 \\
      Minibatch size & 64 & 64 & N/A & N/A & 64 \\
      No. of optimization epochs & 10 & 10 & N/A & N/A & 10 \\
      Maximum episode length & 1000 & 1000 & 1000 & 1000 & 1000 \\
      GAE parameter (reward) & 0.95 & 0.95 & 0.95 & 0.95 & 0.95 \\ 
      GAE parameter (cost) & N/A & 0.95 & 0.95 & 0.95 & 0.95 \\
      Learning rate for policy & $3\times 10^{-4}$ & $3\times 10^{-4}$ & N/A & N/A & $3\times 10^{-4}$\\
      Learning rate for reward value net & $3\times 10^{-4}$ & $3\times 10^{-4}$ & $3\times 10^{-4}$ & $3\times 10^{-4}$ & $3\times 10^{-4}$ \\
      Learning rate for cost value net & N/A & $3\times 10^{-4}$ & $3\times 10^{-4}$ & $3\times 10^{-4}$ & $3\times 10^{-4}$ \\
      Learning rate for $\nu$ & N/A & 0.01 & 0.01 & N/A & 0.01 \\
      $L2$-regularization coeff. for value net & $3\times 10^{-3}$ & $3\times 10^{-3}$ & $3\times 10^{-3}$ & $3\times 10^{-3}$ & $3\times 10^{-3}$ \\
      Clipping coefficient & 0.2 & 0.2 & N/A & N/A & N/A \\
      Damping coeff. & N/A & N/A & 0.01 & 0.01 & N/A \\
       Backtracking coeff. & N/A & N/A & 0.8 & 0.8 & N/A \\
       Max backtracking iterations & N/A & N/A & 10 & 10 & N/A \\
       Max conjugate gradient iterations & N/A & N/A & 10 & 10 & N/A\\
       Iterations for training value net\footnotemark & 1 & 1 & 80 & 80 & 1 \\
     Temperature $\lambda$ & N/A & N/A & N/A & N/A & 1.5 \\
      Trust region bound $\delta$ & N/A & N/A & 0.01 & 0.01 & 0.02 \\
      Initial $\nu$, $\nu_{\max}$ & N/A & 0, 1 & 0, 2 & N/A & 0, 2 \\
       \bottomrule
    \end{tabular}
    \label{tab:hyperparameters}
\end{table}

\footnotetext{for PPO, PPO-L, and FOCOPS, this refers to the number of iteration for training the value net per minibatch update.}

\subsection{Circle}\label{append:circle_experiments}
\subsubsection{Environment Details}\label{append:circle_experiments_env}
In the circle tasks, the goal is for an agent to move along the circumference of a circle while remaining within a safety region smaller than the radius of the circle. The exact geometry of the task is shown in Figure \ref{fig:circle_task_geometry}.
\begin{figure}[h]
    \centering
    \includegraphics[scale=0.5]{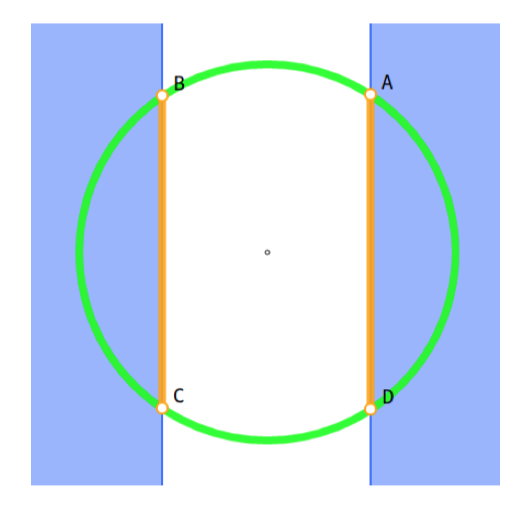}
    \caption{In the Circle task, reward is maximized by moving along the green circle. The agent is not allowed to enter the blue regions, so its optimal constrained path follows the line segments $AD$ and $BC$ (figure and caption taken from \cite{achiam2017constrained}).}
    \label{fig:circle_task_geometry}
\end{figure}
The reward and cost functions are defined as:
\begin{align*}
    R(s) &= \frac{-yv_x+xv_y}{1 + |\sqrt{x^2+y^2}-r|} \\
    C(s) &= \1(|x|>x_{\lim}).
\end{align*}
where $x,y$ are the positions of the agent on the plane, $v_x,v_y$ are the velocities of the agent along the $x$ and $y$ directions, $r$ is the radius of the circle, and $x_{\lim}$ specifies the range of the safety region. The radius is set to $r=10$ for both Ant and Humanoid while $x_{\lim}$ is set to 3 and 2.5 for Ant and Humanoid respectively. Note that these settings are identical to those of the circle task in \cite{achiam2017constrained}. Our experiments were implemented in OpenAI Gym \citep{brockman2016openai} while the circle tasks in \cite{achiam2017constrained} were implemented in rllab \citep{duan2016benchmarking}. We also excluded the Point agent from the original experiments since it is not a valid agent in OpenAI Gym. The first two dimensions in the state space are the $(x,y)$ coordinates of the center mass of the agent, hence the state space for both agents has two extra dimensions compared to the standard Ant and Humanoid environments from OpenAI Gym. Our open-source implementation of the circle environments can be found at \url{https://github.com/ymzhang01/mujoco-circle}.

\subsubsection{Algorithmic Hyperparameters}

For these tasks, we used identical settings as the robots with speed limit tasks except we used a batch size of 50000 for all algorithms and a minibatch size of 1000 for PPO, PPO-Lagrangian, and FOCOPS. The discount rate for both reward and cost were set to 0.995. For FOCOPS, we set $\lambda=1.0$ and $\delta=0.04$.

\section{Generalization Analysis}\label{append:generalization}
We used trained agents using all four algorithms (PPO Lagrangian, TRPO Lagrangian, CPO, and FOCOPS) on robots with speed limit tasks shown in Figure \ref{fig:mujoco_plot}. For each algorithm, we picked the seed with the highest maximum return of the last 100 episodes which does not violate the cost constraint at the end of training. The reasoning here is that for a fair comparison, we wish to pick the best performing seed for each algorithm. We then ran 10 episodes using the trained agents on 10 unseen random seeds (identical seeds are used for all four algorithms) to test how well the algorithms generalize over unseen data. The final results of running the trained agents on the speed limit and circle tasks are reported in Tables \ref{tab:mujoco_velocity_generalization}. We note that on unseen seeds FOCOPS outperforms the other three algorithms on five out of six tasks.

\begin{table}[h]
    \centering
    \caption{Average return of 10 episodes for trained agents on the robots with speed limit tasks on 10 unseen random seeds.  Results shown are the bootstrap mean and normal $95\%$ confidence interval with 1000 bootstrap samples.}
    \vskip 0.15in
    \begin{adjustbox}{width=\textwidth}
    \begin{tabular}{*6c}
    \toprule
    Environment   & {} & PPO-L  & TRPO-L & CPO & FOCOPS \\
    \midrule
    Ant-v3 &Reward  & $920.4\pm 75.9$ & $1721.4\pm 191.2$ & $1335.57\pm 43.17$ & $\boldsymbol{1934.9\pm 99.5}$ \\
    (103.12) &Cost  & $68.25\pm 11.05$ & $99.20\pm 2.55$ & $80.72\pm 3.82$ & $105.21\pm 5.91$ \\
    \hline
    HalfCheetah-v3 &Reward  & $1698.0\pm 22.5$ & $1922.4\pm 12.9$ & $1805.5\pm 60.0$ & \boldsymbol{$2184.3\pm 32.6$} \\
    (151.99) &Cost  & $150.21\pm 4.47$ & $179.82\pm 1.73$ & $164.67\pm 9.43$ & $158.39\pm 6.56$ \\
    \hline
    Hopper-v3 &Reward  & $2084.9\pm 39.69$ & $2108.8\pm 24.8$ & $\boldsymbol{2749.9\pm 47.0}$ & $2446.2\pm 9.0$ \\
    (82.75) &Cost  & $83.43\pm 0.41$ & $82.17\pm 1.53$ & $52.34\pm 1.95$ & $81.26\pm 0.88$ \\
    \hline
    Humanoid-v3 &Reward  & $582.2\pm 28.9$ & $3819.3\pm 489.2$ & $1814.8\pm 221.0$ & $\boldsymbol{4867.3\pm 350.8}$ \\
    (20.14) &Cost  & $18.93\pm 0.93$ & $18.60\pm 1.27$ & $20.30\pm 1.81$ & $21.58\pm 0.74$ \\
    \hline
    Swimmer-v3 &Reward  & $37.90\pm 1.05$ & $33.48\pm 0.44$ & $33.45\pm 2.30$ & $\boldsymbol{39.37\pm 2.04}$ \\
    (24.52) &Cost  & $25.49\pm 0.57$ & $32.81\pm 2.61$ & $22.61\pm 0.33$ & $17.23\pm 1.64$ \\
    \hline
    Walker2d-v3 &Reward  & $1668.7\pm 337.1$ & $2638.9\pm 163.3$ & $2141.7\pm 331.9$ & $\boldsymbol{3148.6\pm 60.5}$ \\
    (81.89) &Cost  & $79.23\pm 1.24$ & $90.96\pm 0.97$ & $40.67\pm 6.86$ & $73.35\pm 2.67$ \\
    \bottomrule
    \end{tabular}
    \end{adjustbox}
    \label{tab:mujoco_velocity_generalization}
\end{table}

\section{Sensitivity Analysis}\label{append:sensitive_analysis}

We tested FOCOPS across ten different values of $\lambda$, and five difference values of $\nu_{\max}$ while keeping all other parameters fixed by running FOCOPS for 1 millon samples on each of the robots with speed limit experiment. For ease of comparison, we normalized the values by the return and cost of an unconstrained PPO agent trained for 1 million samples (i.e. if FOCOPS achieves a return of $x$ and an unconstrained PPO agent achieves a result of $y$, the normalized result reported is $x/y$) The results on the robots with speed limit tasks are reported in Tables \ref{tab:lambda_FOCOPS} and \ref{tab:nu_max_FOCOPS}. We note that the more challenging environments such as Humanoid are more sensitive to parameter choices but overall FOCOPS is largely insensitive to hyperparameter choices (especially the choice of $\nu_{\max}$). We also presented the performance of PPO-L and TRPO-L for different values of $\nu_{\max}$.
\begin{table}[h]
    \centering
    \caption{Performance of FOCOPS for Different $\lambda$}
    \vskip 0.15in
    \begin{adjustbox}{width=\textwidth}
        \begin{tabular}{*{15}{c}}
\toprule 
{} & \multicolumn{2}{c}{Ant-v3} & \multicolumn{2}{c}{HalfCheetah-v3} & \multicolumn{2}{c}{Hopper-v3} & \multicolumn{2}{c}{Humanoid-v3} & \multicolumn{2}{c}{Swimmer-v3} & \multicolumn{2}{c}{Walker2d-v3} & \multicolumn{2}{c}{All Environments} \\ 
\cmidrule(lr){2-3}\cmidrule(lr){4-5}\cmidrule(lr){6-7}\cmidrule(lr){8-9}\cmidrule(lr){10-11}\cmidrule(lr){12-13}\cmidrule(lr){14-15}
$\lambda$ & Reward & Cost & Reward  & Cost & Reward  & Cost & Reward  & Cost & Reward  & Cost & Reward  & Cost & Reward  & Cost \\ 
\midrule
0.1& 0.66& 0.55& 0.38& 0.46& 0.77& 0.50& 0.63& 0.52& 0.34& 0.51& 0.43& 0.48& 0.53& 0.50 \\
0.5& 0.77& 0.54& 0.38& 0.45& 0.97& 0.50& 0.71& 0.54& 0.36& 0.50& 0.66& 0.50& 0.64& 0.50 \\
1.0& 0.83& 0.55& 0.47& 0.47& 1.04& 0.50& 0.80& 0.52& 0.34& 0.49& 0.76& 0.49& 0.70& 0.50 \\
1.3& 0.83& 0.55& 0.42& 0.47& 1.00& 0.50& 0.85& 0.53& 0.36& 0.51& 0.87& 0.49& 0.72& 0.51 \\
1.5& 0.83& 0.55& 0.42& 0.47& 1.01& 0.50& 0.87& 0.52& 0.37& 0.51& 0.87& 0.50& 0.73& 0.51 \\
2.0& 0.83& 0.55& 0.42& 0.47& 1.06& 0.50& 0.89& 0.52& 0.37& 0.52& 0.82& 0.45& 0.73& 0.51 \\
2.5& 0.79& 0.54& 0.43& 0.47& 1.03& 0.50& 0.94& 0.53& 0.35& 0.50& 0.73& 0.49& 0.71& 0.51 \\
3.0& 0.76& 0.54& 0.42& 0.47& 1.01& 0.49& 0.92& 0.52& 0.41& 0.50& 0.77& 0.49& 0.72& 0.50 \\
4.0& 0.70& 0.54& 0.40& 0.46& 1.00& 0.49& 0.87& 0.53& 0.43& 0.49& 0.64& 0.49& 0.67& 0.50 \\
5.0& 0.64& 0.55& 0.40& 0.47& 1.01& 0.50& 0.81& 0.54& 0.38& 0.49& 0.57& 0.50& 0.63& 0.51 \\
\bottomrule
\end{tabular} 
\end{adjustbox}
    \label{tab:lambda_FOCOPS}
\end{table}

\begin{table}[h]
    \centering
    \caption{Performance of FOCOPS for Different $\nu_{\max}$}
    \vskip 0.15in
    \begin{adjustbox}{width=\textwidth}
        \begin{tabular}{*{15}{c}}
\toprule 
{} & \multicolumn{2}{c}{Ant-v3} & \multicolumn{2}{c}{HalfCheetah-v3} & \multicolumn{2}{c}{Hopper-v3} & \multicolumn{2}{c}{Humanoid-v3} & \multicolumn{2}{c}{Swimmer-v3} & \multicolumn{2}{c}{Walker2d-v3} & \multicolumn{2}{c}{All Environments} \\ 
\cmidrule(lr){2-3}\cmidrule(lr){4-5}\cmidrule(lr){6-7}\cmidrule(lr){8-9}\cmidrule(lr){10-11}\cmidrule(lr){12-13}\cmidrule(lr){14-15}
$\nu_{\max}$ & Reward & Cost & Reward  & Cost & Reward  & Cost & Reward  & Cost & Reward  & Cost & Reward  & Cost & Reward  & Cost \\ 
\midrule
1& 0.83& 0.55& 0.45& 0.61& 1.00& 0.51& 0.87& 0.52& 0.40& 0.62& 0.88& 0.50& 0.74& 0.55 \\
2& 0.83& 0.55& 0.42& 0.47& 1.01& 0.50& 0.87& 0.52& 0.35& 0.51& 0.87& 0.50& 0.73& 0.51 \\
3& 0.81& 0.54& 0.41& 0.47& 1.01& 0.49& 0.83& 0.53& 0.34& 0.49& 0.87& 0.50& 0.71& 0.50 \\
5& 0.82& 0.55& 0.41& 0.47& 1.01& 0.50& 0.83& 0.53& 0.31& 0.49& 0.87& 0.50& 0.71& 0.51 \\
10& 0.82& 0.55& 0.41& 0.47& 1.01& 0.50& 0.83& 0.53& 0.34& 0.47& 0.87& 0.50& 0.71& 0.50 \\
$+\infty$ & 0.82& 0.55& 0.41& 0.47& 1.01& 0.50& 0.83& 0.53& 0.35& 0.47& 0.88& 0.50& 0.72& 0.50 \\
\bottomrule
\end{tabular} 
    \end{adjustbox}
    \label{tab:nu_max_FOCOPS}
\end{table}

\begin{table}[h]
    \centering
    \caption{Performance of PPO Lagrangian for Different $\nu_{\max}$}
    \vskip 0.15in
    \begin{adjustbox}{width=\textwidth}
        \begin{tabular}{*{15}{c}}
\toprule 
{} & \multicolumn{2}{c}{Ant-v3} & \multicolumn{2}{c}{HalfCheetah-v3} & \multicolumn{2}{c}{Hopper-v3} & \multicolumn{2}{c}{Humanoid-v3} & \multicolumn{2}{c}{Swimmer-v3} & \multicolumn{2}{c}{Walker2d-v3} & \multicolumn{2}{c}{All Environments} \\ 
\cmidrule(lr){2-3}\cmidrule(lr){4-5}\cmidrule(lr){6-7}\cmidrule(lr){8-9}\cmidrule(lr){10-11}\cmidrule(lr){12-13}\cmidrule(lr){14-15}
$\nu_{\max}$ & Reward & Cost & Reward  & Cost & Reward  & Cost & Reward  & Cost & Reward  & Cost & Reward  & Cost & Reward  & Cost \\ 
\midrule
1& 0.80& 0.55& 0.41& 0.49& 0.98& 0.49& 0.73& 0.52& 0.28& 0.50& 0.77& 0.50& 0.66& 0.51 \\
2& 0.71& 0.49& 0.36& 0.50& 0.81& 0.48& 0.73& 0.52& 0.32& 0.50& 0.72& 0.50& 0.61& 0.50 \\
3& 0.78& 0.54& 0.36& 0.47& 0.73& 0.49& 0.73& 0.52& 0.40& 0.48& 0.72& 0.50& 0.62& 0.50 \\
5& 0.77& 0.53& 0.35& 0.47& 0.73& 0.49& 0.73& 0.52& 0.40& 0.49& 0.72& 0.50& 0.62& 0.50 \\
10& 0.77& 0.54& 0.36& 0.47& 0.73& 0.49& 0.73& 0.52& 0.40& 0.49& 0.72& 0.50& 0.62& 0.50 \\
$+\infty$ & 0.66& 0.54& 0.27& 0.45& 0.73& 0.49& 0.55& 0.47& 0.40& 0.49& 0.72& 0.50& 0.55& 0.49 \\
\bottomrule
\end{tabular} 
    \end{adjustbox}
    \label{tab:nu_max_ppol}
\end{table}

\begin{table}[h]
    \centering
    \caption{Performance of TRPO Lagrangian for Different $\nu_{\max}$}
    \vskip 0.15in
    \begin{adjustbox}{width=\textwidth}
    \begin{tabular}{*{15}{c}}
\toprule 
{} & \multicolumn{2}{c}{Ant-v3} & \multicolumn{2}{c}{HalfCheetah-v3} & \multicolumn{2}{c}{Hopper-v3} & \multicolumn{2}{c}{Humanoid-v3} & \multicolumn{2}{c}{Swimmer-v3} & \multicolumn{2}{c}{Walker2d-v3} & \multicolumn{2}{c}{All Environments} \\ 
\cmidrule(lr){2-3}\cmidrule(lr){4-5}\cmidrule(lr){6-7}\cmidrule(lr){8-9}\cmidrule(lr){10-11}\cmidrule(lr){12-13}\cmidrule(lr){14-15}
$\nu_{\max}$ & Reward & Cost & Reward  & Cost & Reward  & Cost & Reward  & Cost & Reward  & Cost & Reward  & Cost & Reward  & Cost \\ 
\midrule
$1$ & 0.71& 0.50& 0.70& 0.68& 0.61& 0.50& 0.68& 0.50& 0.43& 0.61& 0.48& 0.50& 0.61& 0.55 \\ 
$2$ & 0.70& 0.51& 0.50& 0.53& 0.39& 0.53& 0.68& 0.50& 0.33& 0.53& 0.36& 0.50& 0.49& 0.52 \\ 
$3$ & 0.70& 0.51& 0.52& 0.53& 0.41& 0.53& 0.68& 0.50& 0.30& 0.67& 0.35& 0.50& 0.49& 0.54 \\ 
$5$ & 0.70& 0.51& 0.49& 0.52& 0.36& 0.52& 0.68& 0.50& 0.23& 0.67& 0.35& 0.51& 0.47& 0.54 \\ 
$10$ & 0.70& 0.51& 0.48& 0.51& 0.34& 0.52& 0.68& 0.50& 0.31& 0.77& 0.34& 0.50& 0.47& 0.55 \\ 
$+\infty$ & 0.70& 0.51& 0.48& 0.51& 0.36& 0.52& 0.68& 0.50& 0.30& 0.78& 0.34& 0.50& 0.48& 0.55 \\ 
\bottomrule
\end{tabular} 
    \end{adjustbox}
    \label{tab:nu_max_trpol}
\end{table}

\end{appendices}

\end{document}